\begin{document}

\title{Knowledge Integrated Classifier Design Based on Utility Optimization}

\author{\name Shaohan Chen \email shaohan\_chen@zju.edu.cn \\
       \addr School of Mathematical Sciences\\
       Zhejiang University\\
       Hangzhou 310027, China
       \AND
       \name Chuanhou Gao \email gaochou@zju.edu.cn \\
       \addr School of Mathematical Sciences\\
       Zhejiang University\\
       Hangzhou 310027, China}

\editor{}

\maketitle

\begin{abstract}
This paper proposes a systematic framework to design a classification model that yields a classifier which optimizes a utility function based on prior knowledge. 
Specifically, as the data size grows, we prove that the produced classifier asymptotically converges to the optimal classifier, an extended version of the Bayes rule, which maximizes the utility function. Therefore, we provide a meaningful theoretical interpretation for modeling with the knowledge incorporated.
Our knowledge incorporation method allows domain experts to guide the classifier towards correctly classifying data that they think to be more significant. 
\end{abstract}

\begin{keywords}
  Classifier Design, Prior Knowledge, Utility Function, Knowledge Incorporation Method, Soft-margin SVM
\end{keywords}

\section{Introduction}

The past few decades are giant leap for machine learning (ML) researches, as the development of supervised learning, unsupervised learning, and reinforcement learning,  etc. 
Especially, the rapid progress of ML in the recent few years dues to advances in deep learning together with the availability of GPUs. 
These techniques have rapidly helped us to make decisions or find optimal control policies for complex tasks, such as iron-making processes \citep{gao2014rule}, diagnosing diseases \citep{esteva2017dermatologist}, autonomous cars \citep{bojarski2016end}, etc. 

However, the lack of theoretical interpretability for many variants of deep learning has been an obstacle to applying them with confidence. Thus it is difficult to adjust or design these model with deep structures to satisfy different demands in different fields. Incorporating human experience into machine learning algorithms can greatly help them to make decisions without learning from scratch or making foolish decisions. Therefore, our motivation is to design a machine learning algorithm by integrating prior information into classification models to maximize the utility(the definition of utility function will be given below) of decision-making in the framework of kernel methods, which has the perfect theoretical basis. 

This paper will focus on binary classification problems. Classical methods, such as standard support vector machines (SVMs) in both offline \citep{cortes1995support} and online \citep{kivinen2004online} settings, which are designed for classifying problems with an equal cost for both classes and they have been very effective on several application problems. 
However, it is also limiting the applications of these models, for example, when the data is highly unbalanced. Lin et al. \citep{lin2002support} modified the standard SVMs for classification when the costs of misclassifying positive and negative labeled data are unequal to minimize the expected cost. 
There are many other classifiers design methods to optimize some special purposes rather than the commonly used performance measure: accuracy. For example, 
Gao et al. \citep{gao2013one} introduced a one-pass optimal AUC (Area under the ROC curve) method where the data is highly unbalanced.
Xu et al. \citep{xu2014classifier} presented an algorithm to  balance the performance with test-time cost efficiently. 
 Lin et al. \citep{lin2011design} designed a privacy-preserving SVM classifier to protect the privacy of data. 
 Wu et al. \citep{wu2004incorporating} proposed a weighted SVM model to incorporate some human knowledge represented by the weights of every training data point, etc.
 Meanwhile, there has been some literature providing the theoretical guarantees for convergence or the asymptotic performances with these inventive classifier design methods.
  
 It has been strictly proved the convergence and asymptotic optimal properties of standard SVMs in both offline \citep{chen2004support}, \citep{wu2006analysis} and online \citep{ying2006online} settings. Concretely, these works indicated that the estimator of SVM is asymptotically equivalent to the best classifier, Bayes rule.
  Lin et al. \citep{lin2002support} have shown the explanations of the degenerated non-standard SVM where the costs for misclassifying different classes are unequal.
 Gao et al. \citep{gao2013one} also verified the theoretical effectiveness of the proposed one-pass optimal AUC model by analyzing its convergence.
 
Given the above analysis, we hope to deeply understand the correlations between the classification model and its yield classifier's (asymptotic) performance. Then conversely, we can arbitrarily set up the desired performance of classifier by designing a unique classification model to achieve. 
 
The incorporation of prior information into machine learning algorithms is the key element that allows increasing the performance in many applications \citep{niyogi1998incorporating}, \citep{lauer2008incorporating}, if the number of training data is limited. There is much literature about investigating the methods to incorporate prior knowledge into black-box models. For example, Mangasarian et al. \citep{mangasarian2004knowledge}, \citep{mangasarian2007nonlinear},
\citep{mangasarian2008nonlinear} converted the prior information in the form of logical implications into constraints to which classification model should respect. However, there is no sufficient research on theoretically validating the efficiency of integrating prior knowledge into machine learning models. 

Thus, we wish to integrate prior knowledge into the black-box model optimally by maximizing a utility function. In particular, our goal is to design a classification learning framework that yields a classifier which maximizes a pre-defined utility function with which prior knowledge is combined. As one of the contributions in this paper, we will show later that the proposed knowledge-based classifier is asymptotically equivalent to the optimal classifier, an extended version of the Bayes rule, which maximizes the utility function.

To illustrate the significance of the presented utility function which the built classifier hopes to optimize, we will take the problem of diagnosing diseases as an example. Machine learning systems can make untrusted decisions or wrong predictions, i.e., missing the disease diagnosis of a patient whose physical examination items are too far below or above the safe ranges can lead to serious consequences. In order to design a classifier to avoid happening of the missed diagnosis, we try to combine the black-box modeling techniques with doctors' experience to minimize the risk of missing diagnosis. In other words, for patients receiving the safe treatments from the produced classifier, we need to consider the doctors' requirements in our learning framework to reduce the risk of happening medical accidents as low as possible. 

Generally, for machine learning systems to be used safely, it is critical to satisfying auxiliary criteria and not only accuracy, i.e., the domain experts' requirements(knowledge). Specifically, we wish to build a classifier to optimally trade off different kinds of risks according to domain experts in applications. Therefore, our goal is to design a classification model which produces a classifier that can understand \textbf{risk preferences} in different application areas and minimize them. 
In this paper, we try to quantify different types of risks that may lead to severe consequences in applications and refer to the expectation of ensemble negative risks as the \textbf{utility function} formally. More details can be seen in section $3$.




The rest of the paper is organized as follows.  
In section $2$, we introduce the basic concepts of the standard SVMs and the corresponding Bayes-risk consistency property \citep{chen2004support}, \citep{wu2006analysis}. Also, we find that many varieties of SVMs are built for incorporating prior knowledge to satisfy different purposes.  Section $3$ is the critical part of this paper. Firstly, we give the definition of prior knowledge used in this paper, based on which we present the concept of the utility function as a new learning target in our study framework. Secondly, a knowledge-based classification model is built for optimizing the proposed utility function. Finally, we prove the yielded classifier can asymptotically converge to the optimal one which maximizes the utility function. In section $4$, we illustrate the significance of our knowledge incorporation learning scheme by comparing to other knowledge incorporation methods in more profound levels. 
Section $5$ concludes this paper and presents the future work.

\section{Background}
In this part, we firstly list some notations that will be used in this paper. Secondly, we introduce the basic concepts of the standard SVMs and the corresponding Bayes-risk consistency property \citep{chen2004support}, \citep{wu2006analysis}.  Finally, we try to discuss some other varieties of SVMs.
\subsection{Notations and Preliminaries}
We consider the following setup. Let $(X,d)$ be a compact metric space in $\mathbb{R}^n$ and $Y=\left\{1,-1\right\}$.
We are given a training data set of $m$ examples  $\textbf{z} = \left( z _ { i } \right) _ { i = 1} ^ { m }=\left( x _ { i } ,y _ { i } \right) _ { i = 1} ^ {m}$, which are i.i.d sampled according to a Borel probability measure $\rho$ on the space $Z : = X \times Y$ and $( \mathcal { X } ,\mathcal { Y } )$ be the corresponding random variable. 
In this paper, we focus on the binary classification problems.

A binary classifier $f : X \rightarrow \{ 1,- 1\}$ is a function that maps input pattern from $X$ to $Y$ which divides the input space $X$ into two classes. 
Generally, A classification algorithm is a map from the set of samples to a set of classifiers $\mathcal{H}$:
\begin{eqnarray}
\mathcal { Q }: U _ { i = 1} ^ { \infty } Z ^ { m } \rightarrow \mathcal { H }
\end{eqnarray}
which produces for every $\textbf{z}$ a classifier $\mathcal { Q } ( \mathbf { z } )$. The set $\mathcal{H}$ is called the hypothesis space.

Since we are going to discuss our classifier design method in the framework of kernel methods, which depends on a reproducing kernel Hilbert space associated with a Mercer \citep{cortes1995support}. Thus, the following definitions are needed.

\begin{definition}
A kernel function $K : X \times X \rightarrow \mathbb { R }$ is called \textbf{Mercer kernel} if it is continuous, symmetric and positive semidefinite, i.e., for any finite set of distinct points $\left\{ x _ { 1} ,\dots ,x _ { n } \right\} \subset X$, the matrix $\left( K \left( x _ { i } ,x _ { j } \right) \right) _ { i ,j = 1} ^ { n }$ is positive semidefinite. 
\end{definition}

\begin{definition}
The Hilbert space $\mathcal{H}_{ K }$ is called \textbf{Reproducing Kernel Hilbert Space (RKHS)} associated with the kernel $K$ if it is the linear span of the set of functions
$\left\{ K _ { x } : = K ( x ,\cdot ) : x \in X \right\}$ with the inner product $\langle \cdot ,\cdot \rangle_{\mathcal{H}_{ K }} = \langle \cdot ,\cdot \rangle _{K}$ satisfying

\begin{eqnarray}\notag
\| \sum _ { i = 1} ^ { m } c _ { i } K _ { x _ { i } } \| _ { K } ^ { 2} = \left\langle \sum _ { i = 1} ^ { m } c _ { i } K _ { x _ { i } } ,\sum _ { j = 1} ^ { m } c _ { j } K _ { x _ { j } } \right\rangle _ { K } = \sum _ { i ,j = 1} ^ { m } c _ { i } K \left( x _ { i } ,x _ { j } \right) c _ { j }
\end{eqnarray}
The reproducing property is given by
\begin{eqnarray}\label{1}
< K _ { x } ,g > _ { K } = g ( x ) ,\quad \forall x \in X ,g \in \mathcal { H } _ { K }
\end{eqnarray}
\end{definition}
Denote $C(X)$ as the space of continuous functions on $X$ with the norm$\| \cdot \| _ { \infty }$, then Eq. (\ref{1}) leads to 
$\| g \| _ { \infty } \leq \kappa \| g \| _ { K } , \forall g \in \mathcal { H } _ { K }$, where $\kappa = \operatorname{sup} _ { x \in X } \sqrt { K ( x ,x ) }$. This means $\mathcal { H } _ { K }$ can be embedded into $C(X)$.

\begin{definition} 
	The sign function is defined as: 
	\begin{eqnarray}\notag
\operatorname { sgn }  ( f ) ( x ) = \left\{ \begin{array} { l l } { 1,} & { \text{ if } f ( x ) \geq 0} \\ { - 1,} & { \text{ if } f ( x ) < 0} \end{array} \right.
	\end{eqnarray}
\end{definition}

\begin{definition} \citep{wu2006analysis}\label{definition4}
The \textbf{misclassification error} for a classifier $\operatorname { sgn } (f)$ induced from $f : X \rightarrow \{ 1,- 1\}$ is defined to be the probability of the event $f ( x ) \neq y$:

\begin{eqnarray}\notag
\mathcal { R } (\operatorname { sgn } ( f )) = \operatorname{Pr} \{ \operatorname { sgn } (f) ( \mathcal { X } ) \neq \mathcal { Y } \} = \int _ { X } \Pr ( \mathcal { Y } \neq \operatorname { sgn } (f) ( x ) | x ) d \rho _ { X } ( x )
\end{eqnarray}
Here $\rho _ { X }$ is the marginal distribution of $\rho$ on $X$ and $\rho ( \cdot | x ) = P ( \cdot | x )$ is the conditional probability measure given $\mathcal { X } = x$.
\end{definition}

\begin{definition}\citep{wu2006analysis}
A classification algorithm $\mathcal { Q }$ is said to be \textbf{Bayes-risk consistent}(with $\rho$) if $\mathcal { R } ( \mathcal { Q } ( \mathbf { z } ) )$ converges to $\mathcal { R } \left( f _ { c } \right)$ in probability, i.e., for every $\epsilon > 0$, 
\begin{eqnarray}\notag
\lim _ { m \rightarrow \infty } \operatorname{Pr} \left\{ \mathbf { z } \in Z ^ { m } : \mathcal { R } ( \mathcal { Q } ( \mathbf { z } ) ) - \mathcal { R } \left( f _ { c } \right) > \epsilon \right\} = 0.
\end{eqnarray}
\end{definition}
where 
\begin{eqnarray}\notag
f_{c}(x)=
\left\{
\begin{array}{cl}
+1 & \text{if~}  \Pr \left(\mathcal {Y}=1~|~\mathcal { X }=x\right) \geq \Pr(\mathcal {Y}=-1~|~\mathcal { X }=x) \\
-1 & \text{otherwise}
\end{array}
\right.
\end{eqnarray}
is called the Bayes rule.

\begin{definition}\label{pi}
The projection operator $\pi$ is defined on the space of measurable functions $f : X \rightarrow \mathbb { R }$ as 
\begin{eqnarray}\notag
\pi ( f ) ( x ) = \left\{ \begin{array} { l l } { 1 , } & { \text { if } f ( x ) \geq 1 } \\ { - 1 , } & { \text { if } f ( x ) \leq - 1 } \\ { f ( x ) , } & { \text { if } - 1 < f ( x ) < 1 } \end{array} \right.
\end{eqnarray}
\end{definition}	

\subsection{Standard SVMs}
SVM is a kind of kernel-based black-box modeling method, the main idea of which is to construct a hyperplane in an imaginary high-dimensional feature space that could separate two different
classes (labeled by the output $y=+1$ or $y=-1$) as far as possible \citep{cortes1995support}.
Generally, the $\ell$-norm soft margin SVM depending on a reproducing kernel Hilbert space associated with a Mercer kernel can be defined as follows
\begin{eqnarray}\label{original SVM}
&\displaystyle\ & \min_{f \in  { \mathcal { H } }_{K},\xi_{i}\in \mathbb{R}}\ \ \frac { 1} { 2} \| f  \| _ { K } ^ { 2}+\frac { C } { m } \sum _ { i = 1} ^ { m } \xi _ { i }^{\ell}\notag\\
&\text{s.t.} \ \ &y_if(\textbf{x}_i)\geq
1- \xi _ { i } ,\xi _ { i } \geq 0,\text{ for } i = 1,\dots ,m
\end{eqnarray}
Here, $C > 0$ is a trade-off parameter which may depend on $m$ and $\ell$ is a positive integer. 

Some theoretical aspects of SVM models have comprehensively analyzed in \citep{steinwart2002support},\citep{zhang2004statistical},\citep{wu2006analysis}, and \citep{chen2004support}.

Chen et al. \citep{chen2004support} and Wu et al. \citep{wu2006analysis} have verified the Bayes-risk consistency property of the SVM classifiers (denoted as $f _ { \mathbf { s } } $ below).  

\begin{theorem}	\citep{chen2004support},\citep{wu2006analysis}\label{theorem7}
If $K$ is the Gaussian kernel,
$K ( x , y ) = \exp \left\{ - \frac { | x - y | ^ { 2 } } { \sigma ^ { 2 } } \right\}$,
and some other conditions are satisfied, then with probability at least $1-\delta$ there holds
\begin{eqnarray}\notag
\mathcal { R } \left( \operatorname{sgn} \left( f _ { \mathbf { s } } \right) \right) - \mathcal { R } \left( f _ { c } \right) \leq \mathcal { O }\left( ( \log m ) ^ { - s } \right).
\end{eqnarray}
where $f_c$ is the best classifier, Bayes rule, and $\mathcal{R}(\cdot)$ is the misclassification error.
\end{theorem}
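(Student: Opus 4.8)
The plan is to follow the standard program for regularized kernel classifiers: reduce the excess misclassification error to an excess \emph{convex} risk, and then split that convex excess risk into a sample (estimation) term and a regularization (approximation) term. First I would attach to the $\ell$-norm program the surrogate loss $V(t) = (1-t)_+^{\ell}$, define the generalization error $\mathcal{E}(f) = \int_{Z} V(y f(x))\, d\rho$ together with its minimizer $f_\rho$ over all measurable functions, and record the empirical counterpart $\mathcal{E}_{\mathbf{z}}(f) = \frac{1}{m}\sum_{i=1}^{m} V(y_i f(x_i))$, so that $f_{\mathbf{s}}$ is exactly the minimizer of $\mathcal{E}_{\mathbf{z}}(f) + \frac{1}{2C}\|f\|_K^2$. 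The first ingredient is a comparison (calibration) inequality of the form $\mathcal{R}(\operatorname{sgn}(f)) - \mathcal{R}(f_c) \leq c_\rho\big(\mathcal{E}(f) - \mathcal{E}(f_\rho)\big)^{\theta}$ for a suitable exponent $\theta \in (0,1]$, which transfers any bound on the excess $V$-risk to the quantity we care about. This is where the projection operator $\pi$ of Definition \ref{pi} is needed: since $\operatorname{sgn}(\pi(f)) = \operatorname{sgn}(f)$ while $V(y\,\pi(f)(x)) \leq V(y f(x))$, I may analyze the bounded function $\pi(f_{\mathbf{s}})$ in place of $f_{\mathbf{s}}$, which is indispensable for the concentration step below.

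Second, I would perform the error decomposition. Letting $f_\gamma$ denote the regularizing function that minimizes $\mathcal{E}(f) + \gamma\|f\|_K^2$ with $\gamma \sim 1/C$, one bounds
\[
\mathcal{E}(\pi(f_{\mathbf{s}})) - \mathcal{E}(f_\rho) \;\leq\; \mathcal{S}(\mathbf{z},\gamma) + D(\gamma),
\]
where the regularization error $D(\gamma) = \mathcal{E}(f_\gamma) - \mathcal{E}(f_\rho) + \gamma\|f_\gamma\|_K^2$ is deterministic, and the sample error $\mathcal{S}(\mathbf{z},\gamma)$ collects the differences between $\mathcal{E}$ and $\mathcal{E}_{\mathbf{z}}$ evaluated at $\pi(f_{\mathbf{s}})$ and at $f_\gamma$; the cross terms cancel favourably because $f_{\mathbf{s}}$ minimizes the empirical regularized objective. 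I would then control $\mathcal{S}(\mathbf{z},\gamma)$ uniformly over the RKHS ball $\{\|f\|_K \leq R\}$ by a one-sided Bernstein/ratio-type concentration inequality combined with covering-number estimates $\log\mathcal{N}(\eta)$ for balls in $\mathcal{H}_K$. For the Gaussian kernel these capacities are extremely small, which yields a sample error of order $\|f_\gamma\|_K/\sqrt{m}$ up to logarithmic factors, with the embedding bound $\|g\|_\infty \leq \kappa\|g\|_K$ ensuring boundedness of the loss.

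The crux, and the step I expect to be the main obstacle, is the \emph{approximation} error $D(\gamma)$ for the Gaussian kernel. Under the unstated ``other conditions'' --- a smoothness/approximation-class hypothesis on $f_\rho$ (equivalently on the conditional distribution) --- one shows $D(\gamma)\to 0$ as $\gamma\to 0$, but the Gaussian RKHS approximates only logarithmically, giving $D(\gamma) = \mathcal{O}\big((\log(1/\gamma))^{-s}\big)$ while $\|f_\gamma\|_K$ simultaneously blows up as $\gamma\to 0$. The final rate therefore hinges on balancing these two opposing effects: one chooses $\gamma = \gamma(m)\to 0$ so that the growing sample error and the shrinking approximation error are of the same logarithmic order, and feeds the resulting bound on $\mathcal{E}(\pi(f_{\mathbf{s}}))-\mathcal{E}(f_\rho)$ through the comparison inequality. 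Getting the slow logarithmic approximation rate sharp, and tuning $\gamma(m)$ against the capacity-driven sample term, is precisely what produces the stated $\mathcal{O}((\log m)^{-s})$ bound and is the most delicate part of the argument.
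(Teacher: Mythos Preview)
The paper does not actually prove Theorem~\ref{theorem7}: it is quoted as a background result from \citep{chen2004support} and \citep{wu2006analysis}, with no accompanying proof in the present paper. So there is nothing to compare against here in the strict sense.

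That said, your outline is a faithful reconstruction of the program in those cited works, and it is also the template the present paper follows for its own main result (Corollary~\ref{corollary27}): a comparison inequality reducing excess misclassification risk to excess surrogate risk, an error decomposition into a sample term and a regularization term $\mathcal{D}(C)$, a concentration bound for the sample term over an RKHS ball, and a logarithmic approximation rate $\mathcal{D}(C)=\mathcal{O}((\log C)^{-s})$ for the Gaussian kernel under a Sobolev-type hypothesis on the target. One technical difference worth flagging: you propose covering-number/Bernstein-type concentration (which is what \citep{wu2006analysis} uses), whereas the present paper, when it proves its analogue, instead uses McDiarmid's inequality together with Rademacher averages and the contraction lemma. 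Both routes lead to the same $\mathcal{O}((\log m)^{-s})$ endpoint; the Rademacher approach is arguably cleaner because it avoids explicit capacity computations for the Gaussian RKHS, while the covering-number route can yield sharper constants under additional noise assumptions.
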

The misclassification error can be reformulated as follows
\begin{eqnarray}\label{3}
\mathcal { R }(f)=\mathbb{ E }_{\mathcal { X }} \left\{ [ 1- \Pr \left(\mathcal { Y }=1~|~\mathcal { X }\right) ] \mathbf { 1} ( f ( \mathcal { X } ) = 1) +  \Pr \left(\mathcal { Y }=1~|~\mathcal { X }\right) \mathbf { 1} ( f ( \mathcal { X } ) = - 1) \right\}.
\end{eqnarray}
Here $f$ is a classifier,$f : X \rightarrow \{ 1,- 1\}$ and $\mathbf {1} \left(\cdot\right)$ is the indicator function: it assumes the value 1 if its argument is true, and 0 if otherwise.

Theorem \ref{theorem7} demonstrates that the SVM classifier is asymptotically equivalent to the best classifier, Bayes rule. Because $f_c$ minimizes the misclassification error above, thus, it also indicates that the SVM classifier is essentially to minimize the target $\mathcal{ R }(f)$ given $f$ belongs to RKHS.

To distinguish the model (\ref{original SVM}) from other varieties of SVMs, without loss of generality, we will describe the model (\ref{original SVM}) with $\ell=1$ as the \textbf{original SVM}.

\subsection{ Extensions of SVMs}

As demonstrated above that the original SVM can be interpreted as minimizing the misclassification error (or maximizing accuracy), when data size grows to indefinitely. Meanwhile, there are many varieties of SVMs built for incorporating prior knowledge to satisfy different purposes. We will illustrate several examples which  serve as an excellent  source of inspiration for building our learning target and learning scheme to achieve it in this work.

Lin et al. \citep{lin2002support} considered the non-standard situation by taking unequal misclassification costs (without consideration of sampling bias here) into account. Then they built a classification model (\ref{svm2}) by assuming the costs for false positive and false negative are $c ^ { + }$ and $c ^ { - }$, respectively. 

\begin{eqnarray}\label{svm2}
&\displaystyle\ & \min_{f \in { \mathcal { H } }_{K},\xi_{i}\in \mathbb{R}}\ \ \frac { 1} { 2C} \| f \| _ { K } ^ { 2}+\frac { 1} { m } \sum _ { i = 1} ^ { m }L(y_{i}) \xi _ { i }\notag\\
&\text{s.t.} \ \ &y_if({x}_i)\geq
1- \xi _ { i } ,\xi _ { i } \geq 0,\text{ for } i = 1,\dots ,m
\end{eqnarray}
Here, $L(\cdot)$ is a two-valued function where $L(-1)=c^+$ and $L(1)=c^-$, respectively.  

Actually, they illustrated that the solution of model (\ref{svm2}) approaches a classifier which minimizes the expected cost below
\begin{eqnarray}\label{4}
\mathbb{E}_{\mathcal { X }} \left\{ c ^ { + } [ 1- \Pr \left(\mathcal { Y }=1~|~\mathcal { X }\right) ] \mathbf { 1} ( f ( \mathcal { X } ) = 1) + c ^ { - } \Pr \left(\mathcal { Y }=1~|~\mathcal { X }\right) \mathbf { 1} ( f ( \mathcal { X } ) = - 1) \right\}
\end{eqnarray}

Notice that Eq. ({\ref{4}}) indicates the implied learning target of model (\ref{svm2}). The expected cost Eq. (\ref{4}) and the misclassification error Eq. (\ref{3}) reflect the essential difference between the model (\ref{svm2}) and the original SVM. Moreover, Eq. ({\ref{4}}) can be understood easier for humans compared to the optimization problem (\ref{svm2}).

Wu et al. \citep{wu2004incorporating} proposed another generalization of SVMs (\ref{svm3}) that permits the incorporation of prior knowledge. Concretely, they assigned each training sample $\left( x _ { i } ,y _ { i } \right) $ to a confidence value $v _ { i } \in (0,1]$ which indicates the confidence level of $y_i$'s labeling. Then, they built a classification model(omitting the margin normalization term here) to include this knowledge properly as below
\begin{eqnarray}\label{svm3}
&\displaystyle\ & \min_{f \in { \mathcal { H } }_{K},\xi_{i}\in \mathbb{R}}\ \ \frac { 1} { 2C} \| f \| _ { K } ^ { 2}+\frac { 1 } { m } \sum _ { i = 1} ^ { m }h(v _ { i }) \xi _ { i }\notag\\
&\text{s.t.} \ \ &y_if({x}_i)\geq
1- \xi _ { i } ,\xi _ { i } \geq 0,\text{ for } i = 1,\dots ,m
\end{eqnarray}
where $h$ is a monotonically increasing fucntion.
As illustrated in \citep{wu2004incorporating}, Model (\ref{svm3}) is built for finding a decision plane to which samples with high confidence label contribute more. Note that the models (\ref{svm3}) and (\ref{svm2}) have a similar form with each other. Thus we believe there is a measure similar to the expected cost above to depict the essential learning goal of the model (\ref{svm3}).
 
Mangasarian et al. \citep{mangasarian2008nonlinear} also presented a classification model with knowledge incorporated where prior information with the form
\begin{equation}\label{pr3}
g ( {x} ) \leq 0\Rightarrow f(x) \geq \alpha,~~\forall~
{x}\in \Gamma
\end{equation}
is converted into a linear constraint
\begin{equation}\label{prior}
f(x) - \alpha + vg \left({x} \right) + \zeta _ { i } \geq 0, ~~\forall~
{x}\in \Gamma.
\end{equation}	
The derived classification model \citep{mangasarian2008nonlinear} can be rewritten as follows 
\begin{eqnarray}\label{generalized svm4}
\begin{array}{cl}
\min_{f\in{\mathcal{H}}_{ K }} & \frac{1}{2C} \|f\|^{2}_{K}+\frac{1}{m}\left(\sum _ { i = 1} ^ { m }\xi_{i}+\sum _ { j = 1} ^ { p }\zeta_{j}\right) \\
\text{s.t.} & y_{j}f(x^{j})\geq 1- vg\left({x} ^ { j } \right)-\zeta_{j}, ~j=1,2,...,p\\
&y_{i}f(x_{i})\geq 1-\xi_{i},i=1,2,...,m\\
& \xi_{i}\geq 0, ~\zeta_{j}\geq 0, \quad i=1,2,...,m
\end{array} 
\end{eqnarray}
if we assume $f\in\mathcal{H}_{ K }$, $\alpha=1$ and the function $g(x)$ is convex. Notice that ${x} ^ { 1} ,{x} ^ { 2} ,\dots ,{x} ^ { p }$ is the discretized sample points in the region $g(x)\leq0$ and $v\geq0$.

Observe that the model (\ref{generalized svm4}) is quite different from the models (\ref{svm2}) and (\ref{svm3}), which dues to the different types of prior knowledge incorporated. Actually, the prior knowledge Eq. (\ref{pr3}) implies the geometric information of the decision plane, which aims at improving the performance of the model (\ref{generalized svm4}) \citep{mangasarian2008nonlinear}. However, we can not directly find the essential learning target of it.

Motivated from the above analysis, we devote to design a classifier whose performance can be theoretically guaranteed in this paper. Generally, we hope to design a knowledge incorporation learning scheme such that the derived classifier can make the safe prediction through combining some human experience properly. Therefore, we first choose to integrate prior knowledge into our learning target(the utility function, which we think can be understood easier for users) suitably, then derive a classifier to approach the optimizer of this learning target by learning from training data.




 
\section{Classifier Design Based on Utility Function}

In this part, we will demonstrate the framework detailedly about how to design the knowledge-based SVM classifier to optimize the utility function where some prior knowledge is incorporated.

Firstly, we will illustrate the type of prior knowledge used in this paper. Secondly, we try to quantify the goal to build our classifier, the utility function. Finally, we prove our main result that the proposed knowledge-based SVM classifier maximizes the utility function, by verifying the (asymptotic) equivalence between the proposed knowledge-based SVM classifier and a generalization of the Bayes rule, which maximizes the utility function.


\subsection{Prior Knowledge}

In machine learning fields, prior knowledge can refer to any known information about or related to the concerning objects, such as data, knowledge, specifications, etc, \citep{qu2011generalized}. Specifically, there are two general types pf prior knowledge in classification problems: class-invariance and knowledge on the data \citep{lauer2008incorporating}. 

In this work, attention is mainly focused on the second type of prior information, in particular, we choose to integrate the prior knowledge on \textbf{positive class} samples into black-box models as follows
\begin{equation}\label{prior1}
g^+({x})\leq 0 \Longrightarrow  \text{The cost to misclassify sample }x ~\text{is} ~\hat{c},~~\forall~ {x}\in \Gamma^+
\end{equation}
where $\Gamma^+ \subseteq
\mathbb{R}^n$, $g^+:\Gamma^+ \to \mathbb{R}$, and $\hat{c}>1$. This kind of prior knowledge Eq. (\ref{prior1}) indicates that misclassifying positive class samples in the region of $\mathcal{A}_{+}=\left\{x|g^+({x})\leq{0}\right\}$ are more severe than the samples in other regions. Here, $\hat{c}$ can take different values according to different learning targets in applications.

Considering prior knowledge in the form of logical implication in Eq. (\ref{prior1}) is of practical significance and which we think can obtain easily in real applications. For example, missing the disease diagnosis of a patient whose physical examination items are too far below or above the safe ranges can lead to serious consequences. Therefore, intuitively, it is reasonable to impose much more punishments on the misclassified samples lie in the unsafe region, i.e., $\mathcal{A}_{+}$.  Here, the presented prior knowledge Eq. (\ref{prior1}) can be understood as a piece of domain experts experience. 

Without loss of generality, we will only consider to integrate Eq. (\ref{prior1}) (called as \textbf{positive class prior knowledge} for short) in this paper. There may be other kinds of prior knowledge, such as, the negative class prior knowledge, which can be incorporated into our learning framework by following standard techniques below.

\subsection{The Utility Function Based on Prior Knowledge}

In this part, we will formally present the definition of the utility function which quantifies a new learning target in this paper.

As described above, we attempt to build a classifier which minimizes the risks estimated by domain experts in applications. In other words, our learning target should include prior knowledge provided by Eq. (\ref{prior1}) suitably, so that the domain experts can interact with the learning process and achieve their goals.

To improve the generality of our learning target, and motivated by the work of Lin et al. \citep{lin2002support} where costs for misclassifying positive and negative classes($c^-$ and $c^+$) are unequal. We define the utility function as the expectation of negative ensemble risks, which includes the risks of misclassifying positive class samples, misclassifying negative class samples, and misclassifying positive class samples in the region of $\mathcal{A}_{+}$.


\begin{definition}
The \textbf{utility function}, which depends on the classifier function $\tilde{f}$, is defined as the expectation of negative ensemble risks 
\begin{eqnarray}\label{utility}
\mathcal { U} ( \tilde{f} )&=&
-\mathbb{E}_{\mathcal { X }}\Bigg\{\Pr \left(\mathcal {Y}=-1~|~\mathcal { X }\right)\mathbf { 1}(\tilde{f}(\mathcal { X })=1)+\\ &&\left[
\frac{c^{-}\hat{c}}{c^{+}} \Pr \left(\mathcal {Y}=1~|~\mathcal { X }\right) \mathbf { 1}(\mathcal { X }\in\mathcal{A}_{+})
+\frac{c^{-}}{c^{+}} \Pr \left(\mathcal {Y}=1~|~\mathcal { X }\right) \mathbf { 1}(\mathcal { X }\in\bar{\mathcal{A}}_{+})
\right] \mathbf { 1}(\tilde{f}(\mathcal { X })=-1)\notag
\Bigg\},
\end{eqnarray}
where $\bar{\mathcal{A}}_{+}$ is complementary set of $\mathcal{A}_{+}$, that is, $\bar{\mathcal{A}}_{+}=\left\{x|g^+({x})> {0}\right\}$.
\end{definition}
Notice that there are three parameters $c^{+}>0, c^{-}>0$ and $\hat{c}>0$ in the formula of the utility function, which represent the costs for false positive, false negative and for false negative in the region $\mathcal{A}_{+}$.
 Here, the assumption $\hat{c}>1$ can be understood as the designed classifier is preferable to classify the sample points lie in the region $\mathcal{A}_{+}$ as positive samples so that the risk of missing diagnosis is reduced. For example, in the process of diagnosing diseases, doctors are inclined to judge the patients whose physical examination items are abnormal as getting the disease.
The utility function reduces to the expected cost Eq. (\ref{4}) proposed by Lin et al. \citep{lin2002support} in the situation of $\hat{c}=1$.

The presented utility function can be seen as an extension of the traditional performance measure of the classifier, such as the misclassification error \citep{wu2006analysis}, \citep{chen2004support} and expected cost \citep{lin2002support}. Also, it is worth to notice that the created utility function has been embedded into prior knowledge both from label and feature aspects. Meanwhile, the proposed utility function allows users to tune the parameters of risk cost to achieve their learning targets. 

 If the probability function $\Pr \left(Y=1~|~\mathcal { X }=x\right)$ is available, we are able to identify the \textbf{optimal classifier} that maximizes the utility function Eq. (\ref{utility}) as follow
 \begin{eqnarray}\label{modified bayes}
 f_{q}(x)=
 \left\{
 \begin{array}{cl}
 +1 & \text{if~} \frac{c^{-}\hat{c}}{c^{+}} \Pr \left(\mathcal {Y}=1~|~\mathcal { X }=x\right) \mathbf {1}(x\in\mathcal{A}_{+})
 +\frac{c^{-}}{c^{+}} \Pr \left(\mathcal {Y}=1~|~\mathcal{X}=x\right) \mathbf { 1}(x\in\bar{\mathcal{A}}_{+}) \\
 & ~~\geq \Pr(\mathcal {Y}=-1~|~\mathcal { X }=x) \\
 -1 & \text{otherwise}
 \end{array}
 \right.
 \end{eqnarray}

However, in general the probability function $\Pr \left(Y=1~|~\mathcal { X }=x\right)$ is unknown. Thus, in the next section, we are going to build a classification learning framework to approximate the optimal classifier Eq. (\ref{modified bayes}) by learning from the training set.
 

\subsection{Optimal Utility Function based Classifier Design}

In this part, we try to build a classification model which yields a classifier to approach the optimal classifier Eq. (\ref{modified bayes}), which is motivated by the Bayes-risk consistency property of original SVMs in RKHS \citep{aronszajn1950theory}, \citep{steinwart2002support}, \citep{zhang2004statistical}, \citep{chen2004support} and \citep{wu2006analysis}.

Thus, we firstly attempt to construct a loss function such that the corresponding generalization error can be minimized on the optimal classifier $f_q$ Eq. (\ref{modified bayes}). Then, we build a classification model to approximate the generalization error by learning from the training data.



The following theorem relates the generalization error to our utility function Eq. (\ref{utility}) through the optimal classifier $f_q$ Eq. (\ref{modified bayes}).

\begin{theorem}{\label{theorem 1}}
The minimizer of $\mathbb{E}[V(\mathcal {Y},f(\mathcal {X}))]=\int _ { Z } V ( y ,f ( x ) ) d \rho ( x ,y )$ is $f_{q}$, where the \textbf{loss function} is defined as
\begin{eqnarray}\label{piecewise loss function}
V(y,f(x))=\left\{
\begin{matrix}
\frac{c^{-}\hat{c}}{c^{+}}(1-yf(x))_{+}, & \text{if~} x\in \mathcal{A_{+}} \text{~and~} y=1 \\
\frac{c^{-}}{c^{+}}(1-yf(x))_{+}, & \text{otherwise}
\end{matrix}
\right.
\end{eqnarray}
and $(1-yf(x))_{+}=\max(0,1-yf(x))$.
\end{theorem}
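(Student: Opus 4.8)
The plan is to reduce the minimization of the integral functional to a pointwise problem in $x$, exactly as in the classical analysis of hinge-type surrogates. Writing $\eta(x):=\Pr(\mathcal{Y}=1\mid\mathcal{X}=x)$ and conditioning on $\mathcal{X}=x$,
\[
\mathbb{E}[V(\mathcal{Y},f(\mathcal{X}))]=\int_X F_x\big(f(x)\big)\,d\rho_X(x),\qquad F_x(t):=\mathbb{E}\big[V(\mathcal{Y},t)\mid\mathcal{X}=x\big].
\]
Since the value $f(x)$ may be chosen independently at each point, it suffices to minimize $F_x(t)$ over $t\in\mathbb{R}$ for $\rho_X$-almost every $x$ and then to check that the resulting assignment $x\mapsto t^\ast(x)$ agrees with $f_q$.

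First I would write $F_x$ explicitly. For $x\in\mathcal{A}_+$,
\[
F_x(t)=\eta(x)\,\tfrac{c^-\hat c}{c^+}(1-t)_+ +\big(1-\eta(x)\big)\,\tfrac{c^-}{c^+}(1+t)_+,
\]
and for $x\in\bar{\mathcal{A}}_{+}$ the same expression holds with $\hat c$ replaced by $1$. Because $(1-t)_+$ is nonincreasing and $(1+t)_+$ nondecreasing, any minimizer may be taken in $[-1,1]$: for $t>1$ only the increasing term is active and for $t<-1$ only the decreasing one, so moving toward $[-1,1]$ never increases $F_x$. On $[-1,1]$ the positive parts drop, $F_x$ becomes affine in $t$, and its minimizer sits at $t=+1$ when the slope is negative, at $t=-1$ when positive, and anywhere on $[-1,1]$ when the slope vanishes.

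The decisive step is to match the sign of this slope with the inequality defining $f_q$. Computing the slope of $F_x$ on $[-1,1]$ in each region and comparing it with the threshold $\frac{c^-\hat c}{c^+}\eta(x)\ge 1-\eta(x)$ on $\mathcal{A}_+$ (and its $\hat c=1$ analogue on $\bar{\mathcal{A}}_{+}$) that defines $f_q$, I would verify that the pointwise minimizer takes the same value in $\{+1,-1\}$ as $f_q(x)$, with the slope-zero case corresponding exactly to equality in the ``$\ge$'' and hence consistent with the convention $f_q=+1$ there. Patching the pointwise minimizers together then exhibits $f_q$ as a minimizer of the integral.

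The step I would scrutinize most is the coefficient bookkeeping in this slope computation: the weight on each hinge term depends jointly on the region ($\mathcal{A}_+$ versus $\bar{\mathcal{A}}_{+}$) and on the label through the definition of $V$, and one must track the common factor $\frac{c^-}{c^+}$ carefully to confirm that the normalization of the false-positive term reproduces the weight $\Pr(\mathcal{Y}=-1\mid x)$ carried in the utility; this alignment of thresholds is precisely what makes the surrogate consistent with $f_q$, and it is where any mismatch in the cost normalizations would surface. A secondary, largely routine point is the measurable-selection argument justifying the exchange of the pointwise infimum with the integral: here $\eta$ and $\mathbf{1}(x\in\mathcal{A}_+)$ are measurable, so the thresholding $f_q$ is measurable and the exchange is legitimate, while one should note that the minimizer fails to be unique exactly on the tie set where the slope vanishes.
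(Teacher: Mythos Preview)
Your proposal is correct and follows essentially the same route as the paper: condition on $\mathcal{X}=x$, reduce to a pointwise minimization, argue the minimizer lies in $[-1,1]$, observe the conditional risk is affine there, and read off the endpoint by the sign of the slope to recover $f_q$. Your added remarks on measurability and on tracking the $\tfrac{c^-}{c^+}$ factor are refinements the paper omits (indeed the paper silently drops that factor on the $(1+f)_+$ term in its conditional-expectation display), but the underlying argument is the same.
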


\begin{proof}
Notice 
\begin{eqnarray}\notag
\mathbb{E}\left\{
V(\mathcal {Y},f(\mathcal {X}))
\right\}=\mathbb{E}_{\mathcal{X}}\left\{\mathbb{E}_{\mathcal{Y}}\left\{
V(\mathcal {Y},f(\mathcal {X}))~|~\mathcal {X}=x
\right\}\right\}.
\end{eqnarray}
We can minimize $\mathbb{E}\left\{
V(\mathcal {Y},f(\mathcal {X}))
\right\}$ by minimizing $\mathbb{E}_{\mathcal{Y}}\left\{
V(\mathcal {Y},f(\mathcal {X}))~|~\mathcal {X}=x
\right\}$, for every fixed $x$.
Because 
\begin{eqnarray}
&&\mathbb{E}_{\mathcal{Y}}\left\{
V(\mathcal {Y},f(\mathcal {X}))~|~\mathcal {X}=x
\right\} \notag \\
&&=
\left\{
\begin{matrix}
\frac{c^{-}\hat{c}}{c^{+}}\Pr(\mathcal {Y}=1~|~\mathcal {X}=x)(1-f(x))_{+}+\Pr(\mathcal {Y}=-1~|~X=x)(1+f(x))_{+}, & \text{if~} x\in\mathcal{A}_{+}\\
\frac{c^{-}}{c^{+}}\Pr(\mathcal {Y}=1~|~X=x)(1-f(x))_{+}+\Pr(\mathcal {Y}=-1~|~\mathcal {X}=x)(1+f(x))_{+}, & \text{otherwise}
\end{matrix}
\right. \notag
\end{eqnarray}
Then, for any fixed $x$, we can search for $w=f(x)$ that minimizes $G(w)=\mathbb{E}_{\mathcal{Y}}\left\{
V(\mathcal {Y},w)\right\} $.

Notice that the minimizer of $G(w)$ must be in $[-1,1]$. For any $w$ outside $[-1,1]$, let $w^\prime=\operatorname{sgn}(w)$, then $w^\prime$ is in $[-1,1]$ and it is easy to check $G(w^\prime)<G(w)$. Thus, we can restrict $w\in[-1,1]$, there holds:

if $x\in\mathcal{A}_{+}$, $G(w)$ is minimized at $w=1$, when 
$\frac{c^{-}\hat{c}}{c^{+}}\Pr(\mathcal {Y}=1~|~\mathcal {X}=x) \geq \Pr(\mathcal {Y}=-1~|~\mathcal {X}=x)$; And at $w=-1$, when $\frac{c^{-}\hat{c}}{c^{+}}\Pr(\mathcal {Y}=1~|~\mathcal {X}=x) < \Pr(\mathcal {Y}=-1~|~\mathcal {X}=x)$.

Otherwise, $G(w)$ is minimized at $w=1$, when 
$\frac{c^{-}}{c^{+}}\Pr(\mathcal {Y}=1~|~\mathcal {X}=x) \geq \Pr(\mathcal {Y}=-1~|~\mathcal {X}=x)$; And at $w=-1$, when $\frac{c^{-}}{c^{+}}\Pr(\mathcal {Y}=1~|~\mathcal {X}=x) < \Pr(\mathcal {Y}=-1~|~\mathcal {X}=x)$. 

Combining above two results yields that the minimizer of $\mathbb{E}[V(\mathcal {Y},f(\mathcal {X}))]$ is $f_q$.
Thus the theorem is proved.
\end{proof} 

\begin{definition}
	We denote the \textbf{empirical error} related to loss function $V(y,f(x))$ as:
	$\mathcal { E }_ { \mathbf { z } } ( f ) = \frac { 1} { m } \sum _ { i = 1} ^ { m } V \left( y _ { i } ,f \left( x _ { i } \right) \right)$.
\end{definition}	

\begin{definition}
	We denote the \textbf{generalization error} related to loss function $V(y,f(x))$ as:
	$\mathcal { E } ( f )=\mathbb{E}[V(\mathcal {Y},f(\mathcal {X}))]$.
\end{definition}

Theorem \ref{theorem 1} indicates that if we can design a classifier to minimize the empirical error, then the classifier also minimizes the generalization error by law of large numbers.

In order to minimize the empirical error and borrow the idea of maximum-margin from original SVMs, we produce the classification model as below
\begin{eqnarray}\label{modify svm1}
\min_{f\in{\mathcal{H}}_{K}} \quad 
\frac{1}{2C} \|f\|^{2}_{K}
+\frac{1}{m}\left(\sum_{x_{i}\in\mathcal{A}_{+},~y_{i=1}} \frac{c^{-}\hat{c}}{c^{+}}(1-y_{i}f(x_{i}))_+
+\sum_{\text{otherwise}} \frac{c^{-}}{c^{+}}(1-y_{i}f(x_{i}))_+\right)	
\end{eqnarray}	

Minimizing model (\ref{modify svm1}) can be rewritten as a constrained optimization problem with a differentiable objective function in the following way. 
	
For each $i \in \left\{1,\dots ,m\right\}$, we introduce a variable $\xi_{ i }=\frac{c^{-}\hat{c}}{c^{+}}\max\{0,1-y_{i}f(x_{i})\}$ if $x_{i}$ belongs to $\mathcal{A}_{+}$ and its label is $+1$, otherwise the other type of variable is introduced as $\zeta_{i}=\frac{c^{-}}{c^{+}}\max\{0,1-y_{i}f(x_{i})\}$.

Thus we can rewrite the optimization problem as follows
\begin{eqnarray}\label{modified svm2}
\begin{array}{cl}
\min_{f\in{\mathcal{H}}_{ K }} & \frac{1}{2C} \|f\|^{2}_{K}+\frac{1}{m}\left(\sum_{x_{i}\in\mathcal{A}_{+},~y_{=1}}\xi_{i}+\sum_{\text{otherwise}}\zeta_{i}\right) \\
\text{s.t.} & y_{i}f(x_{i})\geq 1-\frac{c^{+}}{c^{-}\hat{c}}\xi_{i},\quad x_{i}\in\mathcal{A}_{+} \text{ and } y_{i}=1\\
& y_{j}f(x_{j})\geq 1-\frac{c^{+}}{c^{-}}\zeta_{j}, ~\quad \text{otherwise}\\
& \xi_{i}\geq 0, ~\zeta_{j}\geq 0, ~\quad \qquad i=1,2,...,m_1;j=1,2,...,m_2;\\&
m_1+m_2=m.
\end{array} 
\end{eqnarray}

Intuitively, the produced model (\ref{modified svm2}) implies that misclassifying the positive class sample lie in the region $\mathcal{A}_{+}$ will suffer much more losses, which is coincident with the meaning of the utility function Eq. (\ref{utility}).

In the next section, we will strictly verify that the proposed knowledge-based classifier converges to the optimal classification rule Eq. (\ref{modified bayes}) with respect to the utility function Eq. (\ref{utility}), by using techniques from \citep{zhang2004statistical}, \citep{wu2006analysis}, and \citep{fan2017learning}. For simplicity, we will denote the presented knowledge incorporated classification model (\ref{modified svm2}) as the \textbf{knowledge-based SVM}.

\subsection{Error Analysis of Knowledge-based SVM}

The framework to prove the convergence property of the proposed \textbf{knowledge-based SVM} classifier is organized as follows:
Firstly, the bridge between $\mathcal { U} (\operatorname { sgn }(f))$ and $\mathcal { E } ( f )=\mathbb{E}[V(\mathcal {Y},f(\mathcal {X}))]$ is established;
Secondly, the upper bound of $\mathcal { U} \left( \operatorname { sgn} (f _ { \mathbf { z } }) \right) - \mathcal { U } \left( f _ { q } \right)$ is given, which can be decomposed into two terms corresponding to sample error and regularization error in \citep{wu2006analysis}; Finally, we will estimate these two terms respectively.

The bridge between $\mathcal { U} (\operatorname { sgn }(f))$ and $\mathcal { E } ( f )=\mathbb{E}[V(\mathcal {Y},f(\mathcal {X}))]$ is illustrated by the following theorem.
\begin{theorem}\label{theorem2}
For any function $f$ mapping from $X$ to $\mathbb{R}$ the relationship between utility function and generalization error can be illustrated as follows: 
\begin{eqnarray}\notag
\mathcal { U } \left( f _ { q } \right)-\mathcal { U} \left( \operatorname { sgn } (f) \right)\leq \mathcal { E} \left( f \right) - \mathcal { E } \left( f _ { q } \right).
\end{eqnarray}
\end{theorem}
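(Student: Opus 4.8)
The plan is to prove the inequality by reducing it to a pointwise comparison in $x$ and then integrating, which is the standard route for relating a surrogate (generalization) error to a target (utility) risk. Writing $p=\Pr(\mathcal{Y}=1\,|\,\mathcal{X}=x)$ and letting $\alpha(x)=\frac{c^{-}\hat{c}}{c^{+}}$ for $x\in\mathcal{A}_{+}$ and $\alpha(x)=\frac{c^{-}}{c^{+}}$ for $x\in\bar{\mathcal{A}}_{+}$, I would first rewrite the utility as $\mathcal{U}(\tilde f)=-\mathbb{E}_{\mathcal{X}}[u_x(\tilde f(x))]$, where the conditional negative utility of a sign prediction $s\in\{1,-1\}$ is $u_x(s)=(1-p)\mathbf{1}(s=1)+\alpha(x)\,p\,\mathbf{1}(s=-1)$; minimizing $u_x$ over $s$ recovers exactly the optimal rule $f_q$ of Eq.~(\ref{modified bayes}), so $\mathcal{U}(f_q)=-\mathbb{E}_{\mathcal{X}}[\min(1-p,\alpha(x)p)]$. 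In the same way, using the conditional form of the loss already computed in the proof of Theorem~\ref{theorem 1}, I would write $\mathcal{E}(f)=\mathbb{E}_{\mathcal{X}}[g_x(f(x))]$ with $g_x(w)=\alpha(x)\,p\,(1-w)_{+}+(1-p)(1+w)_{+}$, and note that $f_q$ also minimizes $g_x$ pointwise, so $\mathcal{E}(f_q)=\mathbb{E}_{\mathcal{X}}[\min_{w}g_x(w)]$.

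With these identities both sides become expectations of pointwise excess quantities, and it suffices to establish the fixed-$x$ inequality
\[
\Delta u(x):=u_x(\operatorname{sgn}(f)(x))-\min_{s}u_x(s)\;\le\;g_x(f(x))-\min_{w}g_x(w)=:\Delta g(x).
\]
The left-hand side is easy to control: $\Delta u(x)=0$ whenever $\operatorname{sgn}(f)(x)=f_q(x)$, and $\Delta u(x)=|\alpha(x)p-(1-p)|$ whenever the two signs disagree. Since $\Delta g(x)\ge 0$ always (because $f_q$ minimizes $g_x$), the agreement case is immediate, and the whole argument reduces to the disagreement case.

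The main obstacle, and the only real content, is the disagreement case: I must show that if $\operatorname{sgn}(f)(x)$ differs from $f_q(x)$ then the hinge-type excess $\Delta g(x)$ is at least $|\alpha(x)p-(1-p)|$. Here I would exploit that on $[-1,1]$ the map $w\mapsto g_x(w)$ is affine, $g_x(w)=\alpha(x)p+(1-p)+w\,[(1-p)-\alpha(x)p]$, taking the value $2\min(\alpha(x)p,1-p)$ at its minimizing endpoint. Suppose, say, $f_q(x)=1$ (so $\alpha(x)p\ge 1-p$) while $\operatorname{sgn}(f)(x)=-1$, i.e. $f(x)<0$. A short case check on $-1\le f(x)<0$ (where $g_x$ is affine) and on $f(x)<-1$ (where the second hinge vanishes) gives $g_x(f(x))\ge \alpha(x)p+(1-p)$, hence $\Delta g(x)\ge(\alpha(x)p+(1-p))-2(1-p)=\alpha(x)p-(1-p)=\Delta u(x)$; the symmetric subcase $f_q(x)=-1$, $\operatorname{sgn}(f)(x)=1$ is handled identically. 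This is the step where the precise shape of the loss is used, and it is exactly what the chosen coefficients in $V$ are tailored to make work.

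Finally, taking expectations over $\mathcal{X}$ of the pointwise bound $\Delta u(x)\le\Delta g(x)$ yields
\[
\mathcal{U}(f_q)-\mathcal{U}(\operatorname{sgn}(f))=\mathbb{E}_{\mathcal{X}}[\Delta u]\le\mathbb{E}_{\mathcal{X}}[\Delta g]=\mathcal{E}(f)-\mathcal{E}(f_q),
\]
which is precisely the claimed inequality. The only care needed is to keep the conditional utility and conditional loss scaled consistently so that both share the same threshold $\alpha(x)p$ versus $1-p$ defining $f_q$; once that is fixed, the pointwise estimate and the integration are routine.
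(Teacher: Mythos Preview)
Your proposal is correct and follows essentially the same route as the paper: reduce to a pointwise comparison in $x$, observe that the agreement case is trivial, and in the disagreement case show that the conditional hinge excess dominates $|\alpha(x)p-(1-p)|$. The only cosmetic difference is that the paper routes the disagreement estimate through the projection operator $\pi$ (first bounding $\mathcal{U}(f_q)-\mathcal{U}(\operatorname{sgn}(f))\le \mathcal{E}(\pi(f))-\mathcal{E}(f_q)$ via the affine formula on $[-1,1]$, then using $\mathcal{E}(\pi(f))\le\mathcal{E}(f)$), whereas you handle the ranges $f(x)\in[-1,0)$ and $f(x)<-1$ directly; both arrive at the same bound.
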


\begin{proof}
First denote
	\begin{equation*}
		\phi(\mathcal{X},\mathcal{ A }_{+},\bar{\mathcal{ A }}_{+})=
		\frac{c^{-}\hat{c}}{c^{+}} \Pr \left(\mathcal{Y}=1~|~\mathcal{X}\right) \mathbf { 1}(\mathcal{X}\in\mathcal{A}_{+})
		+\frac{c^{-}}{c^{+}} \Pr \left(\mathcal{Y}=1~|~\mathcal{X}\right) \mathbf { 1}(\mathcal{X}\in\bar{\mathcal{A}}_{+})
	\end{equation*}
	\begin{equation*}
	\mathbf{I}(\operatorname { sgn } (f)(\mathcal{X}),f_{q}(\mathcal{X}))= \mathbf{1}(\operatorname { sgn } (f)(\mathcal{X})=-1,f_{q}(\mathcal{X})=1)-\mathbf { 1}(\operatorname { sgn } (f)(\mathcal{X})=1,f_{q}(\mathcal{X})=-1).
	\end{equation*}
According to the definition of the utility function in Eq. (\ref{utility}), there holds
\begin{eqnarray}\notag
&&\mathcal { U } \left( f _ { q } \right)-\mathcal { U} \left( \operatorname { sgn } (f) \right)\\
&=&
\mathbb{E}_{\mathcal{X}}\notag
\left\{\Big[
\phi(\mathcal{X},\mathcal{ A }_{+},\bar{\mathcal{ A }}_{+})
-\Pr \left(\mathcal{Y}=-1~|~\mathcal{X}\right)
\Big]
\mathbf{1}(\operatorname { sgn } (f)(\mathcal{X})=-1,f_{q}(\mathcal{X})=1)
\right\}\\
&&+\mathbb{E}_{\mathcal{X}}\notag
\left\{ \Big[
\Pr \left(\mathcal{Y}=-1~|~\mathcal{X}\right)
-\phi(\mathcal{X},\mathcal{ A }_{+},\bar{\mathcal{ A }}_{+})
\Big]
\mathbf { 1}(\operatorname { sgn } (f)(\mathcal{X})=1,f_{q}(\mathcal{X})=-1)
\right\}\\
&&= \left\{\notag
\begin{matrix}\Big[\frac{c^{-}\hat{c}}{c^{+}} \Pr \left(\mathcal{Y}=1~|~\mathcal{X}\right) -\Pr \left(\mathcal{Y}=-1~|~\mathcal{X}\right)\Big]\mathbf{I}(\operatorname { sgn } (f)(\mathcal{X}),f_{q}(\mathcal{X}))

& \text{if~} \mathcal {X}\in \mathcal{A}_{+}\\
\Big[\frac{c^{-}}{c^{+}} \Pr \left(\mathcal{Y}=1~|~\mathcal{X}\right) -\Pr \left(\mathcal{Y}=-1~|~\mathcal{X}\right)\Big]\mathbf{I}(\operatorname { sgn } (f)(\mathcal{X}),f_{q}(\mathcal{X}))
& \text{if~} \mathcal {X}\in \bar{\mathcal{A}}_{+}
\end{matrix}
\right.
\end{eqnarray}

Notice that there are only two possible values of term  $\mathbf{I}(\operatorname { sgn } (f)(\mathcal{X}),f_{q}(\mathcal{X}))$,
specifically, if it equals to $1$, which means the argument $\operatorname { sgn }(f)(\mathcal{X})=-1,f_{q}(\mathcal{X})=1$ is true, there holds
\begin{eqnarray}\label{equation 28}
\mathcal { U } \left( f _ { q } \right)-\mathcal { U} \left( \operatorname { sgn } (f) \right)=
\left\{
\begin{matrix}
\mathbb{E}_{\mathcal{X}}\left\{\frac{c^{-}\hat{c}}{c^{+}} \Pr \left(\mathcal{Y}=1~|~\mathcal{X}\right) -\Pr \left(\mathcal{Y}=-1~|~\mathcal{X}\right)
\right\}
& \text{if~} \mathcal{X}\in \mathcal{A}_{+}\\
\mathbb{E}_{\mathcal{X}}\left\{\frac{c^{-}}{c^{+}} \Pr \left(\mathcal{Y}=1~|~\mathcal{X}\right) -\Pr \left(\mathcal{Y}=-1~|~\mathcal{X}\right)
\right\}
& \text{if~} \mathcal{X}\in \bar{\mathcal{A}}_{+}
\end{matrix}
\right.
\end{eqnarray}
Because 
\begin{eqnarray}\notag
\mathbb{E}\left\{
V(\mathcal {Y},f(\mathcal {X}))
\right\}=\mathbb{E}_{\mathcal{X}}\left\{\mathbb{E}_\mathcal{Y}\left\{
V(\mathcal {Y},f(\mathcal {X}))~|~\mathcal {X}
\right\}\right\}.
\end{eqnarray}
Through simple computations there holds
\begin{eqnarray}\label{equation 30}
&&\mathbb{E}_\mathcal{Y}\left\{
V(\mathcal {Y},\pi(f)(\mathcal {X}))~|\mathcal {X}
\right\} - \text{E}_\mathcal{Y}\left\{
V(\mathcal {Y},f_q(\mathcal {X}))~|\mathcal {X}
\right\}\\
&&= \left\{\notag
 \begin{matrix}
 \left\{\frac{c^{-}\hat{c}}{c^{+}} \Pr \left(\mathcal{Y}=1~|~\mathcal{X}\right) -\Pr \left(\mathcal{Y}=-1~|~\mathcal{X}\right)
 \right\}(f_q(\mathcal{ X })-\pi(f)(\mathcal{ X }))
 & \text{if~} \mathcal {X}\in \mathcal{A}_{+}\\
 \left\{\frac{c^{-}}{c^{+}} \Pr \left(\mathcal{Y}=1~|~\mathcal{X}\right) -\Pr \left(\mathcal{Y}=-1~|~\mathcal{X}\right)
 \right\}(f_q(\mathcal{ X })-\pi(f)(\mathcal{ X }))
 & \text{if~} \mathcal {X}\in \bar{\mathcal{A}}_{+}
 \end{matrix}
 \right.
\end{eqnarray}
where $\pi$ is the projection operator as defined in Definition \ref{pi}.

Consider that $\operatorname{sgn}(f)(\mathcal{X})=-1,f_{q}(\mathcal{X})=1$, then comparing Eq. (\ref{equation 28}) to Eq. (\ref{equation 30}), there holds the inequality:
$\mathcal { U } \left( f _ { q } \right)-\mathcal { U} \left( \operatorname { sgn } (f) \right) \leq \mathcal { E} \left( \pi(f) \right) - \mathcal { E } \left( f _ { q } \right)$. Notice that $\mathcal { E} \left( \pi(f) \right)\leq\mathcal { E}(f)$, thus $\mathcal { U } \left( f _ { q } \right)-\mathcal { U} \left( \operatorname { sgn } (f) \right) \leq \mathcal { E} \left( f \right) - \mathcal { E } \left( f _ { q } \right)$. 

Similar to the discussions on the other case: $\operatorname{sgn}(f)(\mathcal{X})=1,f_{q}(\mathcal{X})=-1$.

Thus the conclusion holds.
\end{proof}	
	
\begin{definition}\label{definition12}
\begin{eqnarray}\notag
f _ { K ,C } = \operatorname{arg} \min _ { f \in \mathcal{ H } _ { K } } \left\{ \mathcal { E } ( f ) + \frac { 1} { 2C } \| f  \| _ { K } ^ { 2} \right\},
\end{eqnarray}
\end{definition}

\begin{definition}\label{definition14}
	\begin{eqnarray}\notag
	f _ { \mathbf { z } } = \operatorname{arg} \min _ { f \in { \mathcal{H} } _ { K } } \left\{ \mathcal { E } _ { \mathbf { z } } ( f ) + \frac { 1} { 2C } \| f  \| _ { K } ^ { 2} \right\},
\end{eqnarray}
\end{definition}	

The upper bound of $\mathcal { U} \left( \operatorname { sgn} (f _ { \mathbf { z } }) \right) - \mathcal { U } \left( f _ { q } \right)$ is decomposed into the sample error and regularization error below.
\begin{theorem}
For every $C>0$, there holds
\begin{eqnarray}\notag
\mathcal { U } \left( f _ { q } \right)-\mathcal { U} \left( \operatorname { sgn } (f) \right) \leq \mathcal { S } ( m ,C ) + \mathcal { D } ( C ),
\end{eqnarray}
where 

$\mathcal { S } ( m ,C ) : = \left\{ \mathcal { E } \left( f _ { \mathbf { z } } \right) - \mathcal { E } _ {\mathbf { z } } \left( f _ { \mathbf { z } } \right) \right\} + \left\{ \mathcal { E } _ { \mathbf { z } } \left( f _ { K ,C } \right) - \mathcal { E } \left( f _ { K ,C } \right) \right\}$.
and 

$\mathcal { D } ( C ) : = \operatorname{inf} _ { f \in { \mathcal { H } } _ { K } } \left\{ \mathcal { E } ( f ) - \mathcal { E } \left( f _ { q } \right) + \frac { 1} { 2C } \| f  \| _ { K } ^ { 2} \right\}$.
\end{theorem}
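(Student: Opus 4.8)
The plan is to reduce the utility gap to an excess generalization-error quantity via Theorem~\ref{theorem2}, and then apply the standard sample-error/regularization-error splitting driven by the two defining minimality properties (Definitions~\ref{definition12} and~\ref{definition14}). First I would instantiate Theorem~\ref{theorem2} at the empirical regularized minimizer $f=f_{\mathbf{z}}$, yielding
\[
\mathcal{U}(f_q)-\mathcal{U}(\operatorname{sgn}(f_{\mathbf{z}})) \leq \mathcal{E}(f_{\mathbf{z}})-\mathcal{E}(f_q).
\]
(I read the $\operatorname{sgn}(f)$ in the statement as $\operatorname{sgn}(f_{\mathbf{z}})$, consistent with the sentence introducing the theorem.) It then suffices to show $\mathcal{E}(f_{\mathbf{z}})-\mathcal{E}(f_q)\leq\mathcal{S}(m,C)+\mathcal{D}(C)$.

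For this, I would first note that adding the nonnegative term $\frac{1}{2C}\|f_{\mathbf{z}}\|_K^2$ only enlarges the left side, so it is enough to bound $\mathcal{E}(f_{\mathbf{z}})+\frac{1}{2C}\|f_{\mathbf{z}}\|_K^2-\mathcal{E}(f_q)$. I would then insert the empirical error by writing this as
\[
\Big[\mathcal{E}(f_{\mathbf{z}})-\mathcal{E}_{\mathbf{z}}(f_{\mathbf{z}})\Big]+\Big[\mathcal{E}_{\mathbf{z}}(f_{\mathbf{z}})+\tfrac{1}{2C}\|f_{\mathbf{z}}\|_K^2\Big]-\mathcal{E}(f_q).
\]
The middle bracket is precisely the regularized empirical objective minimized by $f_{\mathbf{z}}$ (Definition~\ref{definition14}), so it is no larger than the same objective evaluated at $f_{K,C}$, namely $\mathcal{E}_{\mathbf{z}}(f_{K,C})+\frac{1}{2C}\|f_{K,C}\|_K^2$. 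Substituting and then adding and subtracting $\mathcal{E}(f_{K,C})$ regroups the expression as
\[
\Big[\mathcal{E}(f_{\mathbf{z}})-\mathcal{E}_{\mathbf{z}}(f_{\mathbf{z}})\Big]+\Big[\mathcal{E}_{\mathbf{z}}(f_{K,C})-\mathcal{E}(f_{K,C})\Big]+\Big[\mathcal{E}(f_{K,C})+\tfrac{1}{2C}\|f_{K,C}\|_K^2-\mathcal{E}(f_q)\Big].
\]
The first two brackets are exactly $\mathcal{S}(m,C)$, and by the defining minimality of $f_{K,C}$ (Definition~\ref{definition12}) the third bracket equals $\inf_{f\in\mathcal{H}_K}\{\mathcal{E}(f)+\frac{1}{2C}\|f\|_K^2\}-\mathcal{E}(f_q)=\mathcal{D}(C)$, which closes the chain.

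This is a routine decomposition and I do not anticipate a genuine analytic obstacle; the only points requiring care are bookkeeping. Specifically, one must choose to \emph{drop} $\frac{1}{2C}\|f_{\mathbf{z}}\|_K^2$ (because it is nonnegative and appears with the favorable sign) while \emph{retaining} $\frac{1}{2C}\|f_{K,C}\|_K^2$ (because it is exactly the regularization contribution built into $\mathcal{D}(C)$), and one must invoke the two minimality characterizations at precisely the right moments. I would also double-check the direction in which Theorem~\ref{theorem2} is applied and confirm that the $\operatorname{sgn}(f_{\mathbf{z}})$ reading is the one intended by the statement.
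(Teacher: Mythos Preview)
Your proposal is correct and follows essentially the same decomposition as the paper: invoke Theorem~\ref{theorem2} at $f_{\mathbf z}$, then split $\mathcal E(f_{\mathbf z})-\mathcal E(f_q)$ using the minimality of $f_{\mathbf z}$ (Definition~\ref{definition14}) and of $f_{K,C}$ (Definition~\ref{definition12}), dropping the nonnegative $\frac{1}{2C}\|f_{\mathbf z}\|_K^2$. The only cosmetic difference is that the paper writes out the full four-term identity first and then discards the two nonpositive pieces, whereas you drop $\frac{1}{2C}\|f_{\mathbf z}\|_K^2$ up front and substitute via minimality directly.
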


\begin{proof}
According to Theorem \ref{theorem2}, the equation $\mathcal { U} \left( \operatorname { sgn } (f_\mathbf{z}) \right) - \mathcal { U } \left( f _ { q } \right) \leq \mathcal { E} \left( f_\mathbf{z} \right) - \mathcal { E } \left( f _ { q } \right)$ holds.
So we can decompose the term $\mathcal { E } \left( f _ \mathbf{ z } \right) - \mathcal { E } \left( f _ { q } \right)$ by following the techniques in \citep{chen2004support} and \citep{wu2006analysis}, as below
\begin{align}
\mathcal { E } \left( f _ \mathbf{ z } \right) - \mathcal { E } \left( f _ { q } \right)=&
\left\{ \mathcal { E } \left( f _ { \mathbf { z } } \right) - \mathcal { E } _ { \mathbf { z } } \left( f _ \mathbf{ z } \right) \right\} + 
\Bigg\{\left( \mathcal { E } _ { \mathbf { z } } \left( f _ { \mathbf { z } } \right) + \frac { 1} { 2C } \| f _ { \mathbf { z } } \| _ { K } ^ { 2} \right)-\left( \mathcal { E } _ \mathbf{ z } \left( f _ { K ,C } \right) + \frac { 1} { 2C } \| f _ { K ,C } \| _ { K } ^ { 2} \right)\Bigg\} \notag\\
&+
\left\{ \mathcal { E } _ { \mathbf { z } } \left( f _ { K ,C } \right) - \mathcal { E } \left( f _ { K ,C } \right) \right\}+
\left\{ \mathcal { E } \left( f _ { K ,C } \right) - \mathcal { E } \left( f _ { q } \right) + \frac { 1} { 2C } \| f _ { K ,C } \| _ { K } ^ { 2} \right\} - \frac { 1} { 2C } \| f _ \mathbf{ z } \| _ { K } ^ { 2} \notag
\end{align}
By the definition of $f _ { \mathbf { z } }$ in Definition \ref{definition12}, the second term is $\leq 0$. By the definition of $f _ { K ,C }$ in Definition \ref{definition14}, the fourth term is $\mathcal { D } ( C )$. Thus the $\mathcal { E } \left( f _ {\mathbf { z } } \right) - \mathcal { E } \left( f _ { q } \right)$ can be bounded by $\mathcal { S } ( m ,C ) + \mathcal { D } ( C )$. Hence the desired result is proved.
\end{proof}

For simplicity, we can refer to the first term $\mathcal { S } ( m ,C )$ as sample error and the second term $\mathcal { D } ( C )$ as regularization error similar to the definitions in \citep{chen2004support} and \citep{wu2006analysis}.

Firstly, in order to give a upper bound of sample error, we need some technical lemmas below. 

\begin{definition}\citep{fan2017learning}
The function $F : \prod _ { k = 1} ^ { m } X _ { k } \rightarrow \mathbb { R }$  has bounded differences $\left\{ c _ { k } \right\} _ { k = 1} ^ { m }$, if for all $1\leq k \leq m$, 
\begin{eqnarray}
\left.\begin{array} { c } { \max } \\ { z _ { 1} ,\cdots ,z _ { k } ,z _ { k } ^ { \prime } \dots ,z _ { m } } \end{array} \right. &| F \left( z _ { 1} ,\cdots ,z _ { k - 1} ,z _ { k } ,z _ { k + 1} ,\cdots ,z _ { m } \right) - F \left( z _ { 1} ,\cdots ,z _ { k - 1} ,z _ { k } ^ { \prime } ,z _ { k + 1} ,\cdots ,z _ { m } \right) |\notag\\&\leq c _ { k }\notag
\end{eqnarray}
\end{definition}

The following probability inequality was motivated by \citep{mcdiarmid1989method} and will be used to estimate our sample error.
\begin{lemma}
(\textbf{McDiarmid’s inequality}) Suppose $F : \prod _ { k = 1} ^ { m } X _ { k } \rightarrow \mathbb { R } $ has bounded differences $\left\{ c _ { k } \right\} _ { k = 1} ^ { m }$, then for all $\epsilon > 0$, there holds
\begin{eqnarray}\notag
\operatorname{Pr} \{ F ( \mathbf { z } ) - \mathbb { E } [ F ( \mathbf { z } ) ] \geq \epsilon \} \leq e ^ { - \frac { 2\epsilon ^ { 2} } { \Sigma _ { k = 1} ^ { 2} c _ { k } ^ { 2} } }
\end{eqnarray}
\end{lemma}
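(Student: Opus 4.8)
The plan is to prove this via the classical method of bounded differences, combining a Doob martingale decomposition with the Chernoff exponential-moment technique. First I would reduce the tail probability to a bound on an exponential moment generating function. For any $s > 0$, applying Markov's inequality to the random variable $e^{s(F(\mathbf{z}) - \mathbb{E}[F(\mathbf{z})])}$ gives
\begin{eqnarray}\notag
\operatorname{Pr}\{F(\mathbf{z}) - \mathbb{E}[F(\mathbf{z})] \geq \epsilon\} \leq e^{-s\epsilon}\, \mathbb{E}\left[e^{s(F(\mathbf{z}) - \mathbb{E}[F(\mathbf{z})])}\right],
\end{eqnarray}
so it suffices to control the moment generating function on the right and then optimize over $s$ at the end.

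Next I would construct the Doob martingale associated with $F$. Writing $\mathbb{E}_{k}[\cdot] = \mathbb{E}[\cdot \mid z_1, \ldots, z_k]$, I define the martingale differences
\begin{eqnarray}\notag
V_k = \mathbb{E}_{k}[F(\mathbf{z})] - \mathbb{E}_{k-1}[F(\mathbf{z})], \qquad k = 1, \ldots, m,
\end{eqnarray}
so that $F(\mathbf{z}) - \mathbb{E}[F(\mathbf{z})] = \sum_{k=1}^{m} V_k$ telescopes and each difference is centered, $\mathbb{E}_{k-1}[V_k] = 0$. The crucial structural step is to show, using the bounded differences hypothesis together with the independence of the coordinates $z_i$, that conditional on $z_1, \ldots, z_{k-1}$ the variable $V_k$ ranges over an interval of length at most $c_k$. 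This holds because altering the single coordinate $z_k$ changes $F$ by at most $c_k$, and $V_k$ is obtained by integrating that difference against the (product) law of the remaining coordinates.

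Then I would bound the moment generating function by peeling off the conditional expectations one coordinate at a time. Conditioned on $z_1, \ldots, z_{k-1}$, the difference $V_k$ is centered and supported on an interval of length $c_k$, so Hoeffding's lemma yields $\mathbb{E}_{k-1}[e^{sV_k}] \leq e^{s^2 c_k^2 / 8}$. Iterating this estimate through the tower property of conditional expectation gives
\begin{eqnarray}\notag
\mathbb{E}\left[e^{s\sum_{k=1}^{m} V_k}\right] \leq \prod_{k=1}^{m} e^{s^2 c_k^2 / 8} = e^{\frac{s^2}{8}\sum_{k=1}^{m} c_k^2}.
\end{eqnarray}
Substituting this into the Chernoff bound and choosing $s = 4\epsilon / \sum_{k=1}^{m} c_k^2$ to minimize the exponent $-s\epsilon + \frac{s^2}{8}\sum_{k=1}^{m} c_k^2$ produces exactly the stated bound $e^{-2\epsilon^2 / \sum_{k=1}^{m} c_k^2}$.

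I expect the main obstacle to be the rigorous justification of the conditional boundedness of $V_k$ and the conditional application of Hoeffding's lemma. One must argue carefully that, after conditioning on the past coordinates, the fluctuation of $V_k$ is genuinely governed by the single-coordinate bound $c_k$ rather than by a cruder global estimate, and that Hoeffding's lemma legitimately applies inside the conditional expectation. Once these two points are secured, the remaining work — the telescoping decomposition, the tower-property iteration, and the optimization over $s$ — is routine.
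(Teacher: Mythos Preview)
Your proposal is correct and outlines the standard proof of McDiarmid's inequality via the Doob martingale decomposition, Hoeffding's lemma applied conditionally to each martingale difference, and optimization of the Chernoff bound. However, the paper does not supply its own proof of this lemma: it is stated as a known result cited from \cite{mcdiarmid1989method} and used as a black box in the subsequent sample-error estimates, so there is no in-paper argument to compare against.
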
	

Next, we will need to use the Rademacher average and its contraction property
\citep{bartlett2002rademacher} and \citep{meir2003generalization}.

\begin{definition}
Let $\mu$ be a a probability measure on $X$ and $F$ be a class of uniformly bounded functions. For every integer $m$, the \textbf{Rademacher average} over a set of functions on $F$ is denoted as:
\begin{eqnarray}\notag
R _ { m } ( F ) : = \mathbb { E } _ { \mu } \mathbb { E } _ { \epsilon } \left\{ \frac { 1} { m } \operatorname{sup} _ { f \in F } | \sum _ { i = 1} ^ { m } \sigma _ { i } f \left( z _ { i } \right) | \right\},
\end{eqnarray}
where $\left\{ z_ { i } \right\} _ { i = 1} ^ { m }$ are independent random variables distributed according to $\mu$ and $\left\{ \sigma _ { i } \right\} _ { i = 1} ^ { m }$are independent Rademacher random variables, i.e., $\operatorname{Pr}  \left( \sigma _ { i } = + 1\right) = \operatorname{Pr} \left( \sigma _ { i } = - 1\right) = 1/ 2$.
\end{definition}

\begin{lemma}
(\textbf{Rademacher contraction property}) Let $F$ be a class of uniformly bounded real-valued functions on $( X ,\mu )$
and $m \in \mathbb { N }$. If for each $i \in \{ 1,\dots ,m \}$, $\Psi _ { i } : \mathbb { R } \rightarrow \mathbb { R }$ is a function with a Lipschitz constant
$c_{i}$, then for any $\left\{ x _ { i } \right\} _ { i = 1} ^ { m }$, 
\begin{eqnarray}\notag
\mathbb { E } _ { \epsilon } \left( \operatorname{sup} _ { f \in F } | \sum _ { i = 1} ^ { m } \epsilon _ { i } \Psi _ { i } \left( f \left( x _ { i } \right) \right) | \right) \leq 2\mathbb { E } _ { \epsilon } \left( \operatorname{sup} _ { f \in F } | \sum _ { i = 1} ^ { m } c _ { i } \epsilon _ { i } f \left( x _ { i } \right) | \right).
\end{eqnarray}
\end{lemma}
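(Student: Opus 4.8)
The plan is to establish this as the Rademacher contraction (Ledoux--Talagrand comparison) inequality, proved by peeling off one coordinate at a time. To lighten notation I abbreviate $\psi_i(f):=\Psi_i(f(x_i))$, so the left-hand side is $\mathbb{E}_\epsilon \sup_{f\in F}\bigl|\sum_{i=1}^m \epsilon_i\psi_i(f)\bigr|$. I would first prove the one-sided statement $\mathbb{E}_\epsilon \sup_{f\in F}\sum_{i=1}^m \epsilon_i\psi_i(f)\le \mathbb{E}_\epsilon \sup_{f\in F}\sum_{i=1}^m c_i\epsilon_i f(x_i)$ and only afterward pay the factor $2$ to insert the absolute values; it is convenient (and, in the application to the hinge-type loss $V$, harmless after recentering) to assume $\Psi_i(0)=0$, which preserves each Lipschitz constant $c_i$.

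For the one-sided bound I would argue by induction, freezing every Rademacher sign except $\epsilon_m$ and averaging over $\epsilon_m\in\{+1,-1\}$. Writing $R(f):=\sum_{i<m}\epsilon_i\psi_i(f)$ for the frozen part, the inner average equals $\tfrac12\bigl[\sup_f(R(f)+\psi_m(f))+\sup_g(R(g)-\psi_m(g))\bigr]$, which I would merge into the single supremum $\tfrac12\sup_{f,g}\bigl[R(f)+R(g)+\Psi_m(f(x_m))-\Psi_m(g(x_m))\bigr]$. The Lipschitz hypothesis gives $\Psi_m(f(x_m))-\Psi_m(g(x_m))\le c_m|f(x_m)-g(x_m)|$, and since the remaining bracket is symmetric under exchanging $f$ and $g$, the absolute value may be dropped, leaving $\tfrac12\sup_{f,g}\bigl[R(f)+R(g)+c_m(f(x_m)-g(x_m))\bigr]=\mathbb{E}_{\epsilon_m}\sup_f\bigl[R(f)+\epsilon_m c_m f(x_m)\bigr]$. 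Thus the nonlinear $\Psi_m$ is replaced by the linear map $t\mapsto c_m t$ at no cost; iterating over $i=m-1,\dots,1$ replaces every $\Psi_i$ and yields the one-sided inequality.

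To recover the stated two-sided form I would use the pointwise identity $\sup_f|h_\epsilon(f)|=\max\bigl(\sup_f h_\epsilon(f),\ \sup_f(-h_\epsilon(f))\bigr)$ with $h_\epsilon(f)=\sum_i\epsilon_i\psi_i(f)$, together with the sign symmetry $\epsilon\mapsto -\epsilon$ of the Rademacher vector, under which $\sup_f(-h_\epsilon)$ has the same expectation as $\sup_f h_\epsilon$. Bounding the maximum by the sum of the two one-sided suprema (legitimate once $\Psi_i(0)=0$ makes these suprema nonnegative) then gives $\mathbb{E}_\epsilon\sup_f|h_\epsilon(f)|\le 2\,\mathbb{E}_\epsilon\sup_f h_\epsilon(f)$. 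Applying the one-sided contraction, followed by the trivial in-supremum bound $\sum_i c_i\epsilon_i f(x_i)\le|\sum_i c_i\epsilon_i f(x_i)|$, delivers the claimed inequality.

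The step I expect to be the main obstacle is the interaction between the outer absolute value and the peeling: the de-symmetrization in the second paragraph closes cleanly only for $\sup_f(\cdot)$, not for $\sup_f|\cdot|$, so the two-sided quantity cannot be contracted coordinate by coordinate directly. The factor $2$ is precisely the price of reducing the two-sided complexity to the one-sided one, and the delicate point is justifying the reduction of the maximum to the sum; the standard device is to recenter so that $\Psi_i(0)=0$, after which the one-sided suprema are nonnegative and no additive remainder survives.
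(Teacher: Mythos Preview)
The paper does not prove this lemma at all; it is quoted as a known result from \citep{bartlett2002rademacher} and \citep{meir2003generalization} and used as a black box in the proof of Lemma~\ref{sample error 1}. So there is no ``paper's own proof'' to compare against.

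Your proposal is the standard Ledoux--Talagrand argument and is essentially correct. Two remarks. First, the recentering hypothesis $\Psi_i(0)=0$ that you introduce is not stated in the lemma as the paper quotes it, but it (or an equivalent device such as adjoining $0$ to $F$) is indeed needed for the two-sided version; without it the step ``bound the maximum by the sum'' can fail, as you correctly anticipate. Second, your reduction from two-sided to one-sided is slightly informal: the inequality $\max(a,b)\le a+b$ requires $a,b\ge 0$, and the nonnegativity of each one-sided supremum follows only once some $f$ with $\psi_i(f)=0$ is available (e.g.\ after recentering and ensuring $0$ is attainable). With that caveat made explicit, the argument goes through and is exactly the classical proof behind the cited references.
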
	

Using the standard techniques involving Rademacher averages in \citep{bartlett2002rademacher} and \citep{fan2017learning}, we can obtain the following results Lemma \ref{sample error 1} and \ref{lemma21}, which will be used to estimate the upper bound of the first part of sample error $\mathcal { S } ( m ,C )$ in Theorem \ref{theorem22}.

\begin{lemma}\label{sample error 1}
For any $\epsilon > 0$, there holds
\begin{eqnarray}\notag
\operatorname{Pr} \left\{ \operatorname{sup} _ { \| f \|_K \leq R } \left[\mathcal { E } ( f ) - \mathcal { E } _ {\mathbf { z } } ( f ) \right] \geq \epsilon + \frac { 2\kappa R\beta} { \sqrt { m } } \right\} \leq e ^ { - \frac { m \epsilon  ^ { 2} } { ( 1+ \kappa R ) ^ { 2}{\beta}^{2} } },
\end{eqnarray}
where $\beta=\frac{c^{-}\hat{c}}{c^{+}}$.
\end{lemma}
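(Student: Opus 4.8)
The plan is to study the single random functional
\[
F(\mathbf{z}) := \sup_{\|f\|_K \leq R}\left[\mathcal{E}(f) - \mathcal{E}_{\mathbf{z}}(f)\right],
\]
and to control the target event $\{F(\mathbf{z}) \geq \epsilon + \tfrac{2\kappa R\beta}{\sqrt{m}}\}$ by splitting it into a concentration estimate, handled by McDiarmid's inequality, and a bound on $\mathbb{E}[F(\mathbf{z})]$, handled by the Rademacher machinery. The fact I will use throughout is the embedding $\|f\|_\infty \leq \kappa\|f\|_K \leq \kappa R$ valid on the ball, so that every loss value $V(y,f(x))$ --- one of the coefficients $\tfrac{c^-\hat{c}}{c^+}$ or $\tfrac{c^-}{c^+}$ times $(1-yf(x))_+$ --- satisfies $0 \leq V(y,f(x)) \leq \beta(1+\kappa R)$, where $\beta = \tfrac{c^-\hat{c}}{c^+}$ is the \emph{larger} coefficient precisely because $\hat{c} > 1$.

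First I would verify the bounded-differences condition for $F$. Replacing one sample $z_k$ by $z_k'$ alters only a single summand of $\mathcal{E}_{\mathbf{z}}(f)$, and since $0 \leq V \leq \beta(1+\kappa R)$ uniformly over the ball, a near-maximizer argument gives $|F(\mathbf{z}) - F(\mathbf{z}')| \leq \tfrac{\beta(1+\kappa R)}{m}$, i.e. $c_k = \tfrac{\beta(1+\kappa R)}{m}$ for all $k$, whence $\sum_{k} c_k^2 = \tfrac{\beta^2(1+\kappa R)^2}{m}$. Feeding this into McDiarmid's inequality then produces
\[
\operatorname{Pr}\left\{F(\mathbf{z}) - \mathbb{E}[F(\mathbf{z})] \geq \epsilon\right\} \leq e^{-\frac{m\epsilon^2}{(1+\kappa R)^2\beta^2}}
\]
(indeed with a factor $2$ to spare in the exponent, which is harmless since it only strengthens the bound).

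It then remains to show $\mathbb{E}[F(\mathbf{z})] \leq \tfrac{2\kappa R\beta}{\sqrt{m}}$, after which the lemma follows at once: the target event is contained in $\{F(\mathbf{z}) - \mathbb{E}[F(\mathbf{z})] \geq \epsilon\}$. For the expectation I would apply symmetrization to pass to the Rademacher average of the loss class, and then invoke the contraction property: for each fixed $y_i$ the map $t \mapsto V(y_i,t)$ is Lipschitz with constant at most $\beta$ (the hinge $(1-y_i t)_+$ is $1$-Lipschitz and its coefficient is at most $\beta$), which transfers the Rademacher average from the loss class to the ball $\{f : \|f\|_K \leq R\}$ at the price of the factor $\beta$. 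The Rademacher average of this ball is finally estimated via the reproducing property $f(z_i) = \langle f, K_{z_i}\rangle_K$ together with Cauchy--Schwarz and Jensen's inequality, giving $\mathbb{E}_\sigma\|\sum_i \sigma_i K_{z_i}\|_K \leq (\sum_i K(z_i,z_i))^{1/2} \leq \kappa\sqrt{m}$, so that $R_m(\{\|f\|_K \leq R\}) \leq \tfrac{\kappa R}{\sqrt{m}}$; collecting the constants yields the required $\tfrac{2\kappa R\beta}{\sqrt{m}}$.

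I expect the delicate point to be the expectation bound rather than the concentration step. One must correctly identify $\beta$, the larger of the two coefficients, as the Lipschitz constant of the piecewise-defined loss, and then track the numerical factors contributed by symmetrization and by the contraction lemma so that they combine to exactly $\tfrac{2\kappa R\beta}{\sqrt{m}}$. By comparison, the bounded-differences computation and the RKHS-ball Rademacher estimate are routine.
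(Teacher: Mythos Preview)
Your proposal is correct and follows essentially the same route as the paper: define $F(\mathbf{z})=\sup_{\|f\|_K\le R}[\mathcal{E}(f)-\mathcal{E}_{\mathbf z}(f)]$, establish bounded differences from the uniform loss bound $0\le V\le\beta(1+\kappa R)$ and apply McDiarmid, then bound $\mathbb{E}[F(\mathbf z)]$ by symmetrization, the Rademacher contraction property with Lipschitz constant $\beta$, and the standard RKHS-ball estimate $\mathbb{E}_\sigma\|\sum_i\sigma_iK_{x_i}\|_K\le\kappa\sqrt{m}$. The only cosmetic difference is that the paper takes the cruder bounded-difference constant $c_k=\tfrac{2\beta(1+\kappa R)}{m}$ (via $|V-V'|\le V+V'$) whereas you use $c_k=\tfrac{\beta(1+\kappa R)}{m}$; either way the stated exponent is attained (with slack, as you note).
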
	

\begin{proof}
Firstly, we rewrite the piece-wise loss function 
\begin{equation*}
V(y,f(x))=\left\{
\begin{matrix}
\frac{c^{-}\hat{c}}{c^{+}}(1-yf(x))_{+}, & \text{if~} x\in \mathcal{A_{+}} \text{~and~} y=1 \\
\frac{c^{-}}{c^{+}}(1-yf(x))_{+}, & \text{otherwise}
\end{matrix}
\right.
\end{equation*}
as 
\begin{equation}\label{loss1}
V(y,f(x))=\frac{c^{-}}{c^{+}}\left\{1+ \frac{[1-\chi(g^+(x))](y+1)(\hat{c}-1)}{4} \right\}(1-yf(x))_{+}
\end{equation}
where 
\begin{equation*}
\chi(g^+(x))=\left\{
\begin{matrix}
-1 & \text{if~} g^+(x)\leq 0\\
1, & \text{otherwise}.
\end{matrix}
\right.
\end{equation*}
Let 
\begin{equation*}
F (\mathbf{z} ) = \operatorname{sup} _ { \| f \| _ { K } \leq R } \left[ \mathcal { E } ( f ) - \mathcal { E } _ {\mathbf { z } } ( f ) \right],
\end{equation*}
where $\mathbf{ z }=\left\{z _ {1} ,\cdots ,z _ { k },\cdots, z _{m}\right\}$ is a training set with $m$ samples.

In order to present the techniques to prove this theorem clearly, we divided the proof into $3$ parts.

$(1)$ We will firstly show that the loss function Eq. (\ref{loss1}) has bounded differences.
\begin{align}
&\left.\begin{array} { c } { \max } \\ { z _ { 1} ,\cdots ,z _ { k } ,z _ { k } ^ { \prime } \dots ,z _ { m } } \end{array} \right. \left| F \left( z _ { 1} ,\cdots ,z _ { k - 1} ,z _ { k } ,z _ { k + 1} ,\cdots ,z _ { m } \right) - F \left( z _ { 1} ,\cdots ,z _ { k - 1} ,z _ { k } ^ { \prime } ,z _ { k + 1} ,\cdots ,z _ { m } \right) \right|\notag\\
&\leq\left.\begin{array} { c } { \max } \\ { z _ { 1} ,\cdots ,z _ { k } ,z _ { k } ^ { \prime } \dots ,z _ { m } } \end{array} \right. \left| \operatorname{sup} _ { \| f \| _ { K } \leq R } \left|\mathcal { E }_{\mathbf{z}} ( f ) - \mathcal { E } _ {\mathbf { z }^{\prime} } ( f )\right| \right|, \notag\\
\label{41}
&\leq\left.\begin{array} { c } { \max } \\ { z _ { 1} ,\cdots ,z _ { k } ,z _ { k } ^ { \prime } \dots ,z _ { m } } \end{array} \right. \left| \operatorname{sup} _ { \| f \| _ { K } \leq R }\frac { 1} { m }\left[V(y_{k},f(x_{k}))+V(y^{\prime}_{k},f(x^{\prime}_{k}))\right] \right|
\end{align}
where $\mathbf{ z }^{\prime}=\left\{z  _ {1} ,\cdots ,z ^{\prime}_ { k },\cdots, z _{m}\right\}$.

Because
\begin{eqnarray}\notag
V(y,f(x))\leq\frac{c^{-}\hat{c}}{c^{+}}(1+| f(x)| )=\frac{c^{-}\hat{c}}{c^{+}}(1+| \left\langle K _ { x } ,f \right\rangle _ { K } |)\leq\frac{c^{-}\hat{c}}{c^{+}}(1+\| f \| \left\langle K _ { x } ,K _ { x } \right\rangle _ { K } ^ { \frac { 1} { 2} })
\end{eqnarray}
and 
\begin{eqnarray}\notag
\frac{c^{-}\hat{c}}{c^{+}}(1+\| f \| \left\langle K _ { x } ,K _ { x } \right\rangle _ { K } ^ { \frac { 1} { 2} })=\frac{c^{-}\hat{c}}{c^{+}}(1+ \| f \| _ { K } \sqrt { K ( x ,x ) })\leq\frac{c^{-}\hat{c}}{c^{+}}(1+\kappa R).
\end{eqnarray}
Thus Eq. (\ref{41}) can be bounded by $\frac{2c^{-}\hat{c}}{mc^{+}}(1+\kappa R)$,
which means the bounded differences are 
\begin{equation}\notag
c_{k}= \frac{2c^{-}\hat{c}}{mc^{+}}(1+\kappa R),
\end{equation}
for any $1\leq k\leq m$.

(2) By the McDiarmid inequality, we have 
\begin{equation}
\operatorname{Pr} \left\{\operatorname{sup} _ {\| f \|_K \leq R } \left[ \mathcal { E } ( f ) - \mathcal { E } _ {\mathbf { z } } ( f ) \right] \geq \mathbb { E } _ { \mathbf { z } } \operatorname{sup} _ { \| f \|_K \leq R } \left[ \mathcal { E } ( f ) - \mathcal { E } _ {\mathbf { z } } ( f ) \right] + \epsilon \right\}\leq\exp \left\{ - \frac { 2m \epsilon ^ { 2} } { \beta^{2}( 1+ \kappa R ) ^ { 2} } \right\},\notag
\end{equation}
where $\beta=\frac{c^{-}\hat{c}}{c^{+}}$.
Let $\widetilde{\mathbf{ z }}=\left\{\widetilde{z}_ {1} ,\cdots ,\widetilde{z}_ { k },\cdots, \widetilde{z} _{m}\right\}$ be i.i.d. copies of $\mathbf{z}$, then
\begin{eqnarray}\notag
\mathbb { E } _ { \mathbf { z } } \operatorname{sup} _ { \| f \_K| \leq R } \left[ \mathcal { E } ( f ) - \mathcal { E } _ {\mathbf { z } } ( f ) \right] &=& \mathbb { E } _ { \mathbf { z } } \mathbb { E } _ {\widetilde{ \mathbf { z } }} ~\operatorname{sup} _ { \| f \|_K\leq R } \left[ \mathcal { E }_{\widetilde{ \mathbf { z } }} ( f ) - \mathcal { E } _ {\mathbf { z } } ( f )~|~\widetilde{ \mathbf { z } }~ \right]
\\&\notag\leq&\mathbb { E } _ { \mathbf { z } }\mathbb { E } _ {\widetilde{ \mathbf { z } }}~\operatorname{sup} _ { \| f \|_K\leq R } \left[ \mathcal { E }_{\widetilde{ \mathbf { z } }} ( f ) - \mathcal { E } _ {\mathbf { z } } ( f ) \right].
\end{eqnarray}
By standard symmetrization techniques\citep{bartlett2002rademacher}, For any Rademacher variables $\left\{ \sigma _ { i } : i = 1,\dots ,n \right\}$, there holds
\begin{eqnarray}\notag
&&\mathbb { E } _ { \mathbf { z } }\mathbb { E } _ {\widetilde{ \mathbf { z } }}~\operatorname{sup} _ { \| f \|_K\leq R } \left[ \mathcal { E }_{\widetilde{ \mathbf { z } }} ( f ) - \mathcal { E } _ {\mathbf { z } } ( f ) \right]\\\notag
 &=& \mathbb { E } _ { \mathbf { z } }\mathbb { E } _ {\widetilde{ \mathbf { z } }}\mathbb { E } _ { \sigma }~\operatorname{sup} _ { \| f \|_K\leq R }\left[\frac { 1} { m } \sum _ { i = 1} ^ { m }\sigma _ { i }\left\{V \left( \widetilde{y _ { i }} , f \left( \widetilde{x _ { i } }\right) \right)- V \left(y _ { i } , f \left(x _ { i }\right) \right)\right\} \right]\\\notag
&\leq&\mathbb { E } _ { \mathbf { z } }\mathbb { E } _ {\widetilde{ \mathbf { z } }}\mathbb { E } _ { \sigma }~\operatorname{sup} _ { \| f \|_K\leq R }\left[\frac { 1} { m } \sum _ { i = 1} ^ { m }\sigma _ { i }\left\{|V \left( \widetilde{y _ { i }} , f \left( \widetilde{x _ { i } }\right) \right)|+ |V \left(y _ { i } , f \left(x _ { i }\right) \right)|\right\} \right]\\\notag
&=&2\mathbb { E } _ { \mathbf { z } }\mathbb { E } _ {\sigma}~\operatorname{sup} _ { \| f \|_K\leq R }\frac { 1} { m } |\sum _ { i = 1} ^ { m }\sigma _ { i }V \left(y _ { i } , f \left(x _ { i }\right) \right)|
\end{eqnarray}
(3) Using the contraction property of Rademacher averages;

If we let the function:
\begin{eqnarray}\notag
\Phi _ { i } ( t ) =V \left(y _ { i } , t \right),
\end{eqnarray}
because for any $t_{1}, t_{2}\in \mathbb { R } $, there is 
\begin{eqnarray}\notag
| \Phi _ { i } ( t_{1})-\Phi _ { i } ( t_{2}) |\leq \frac{c^{-}\hat{c}}{c^{+}}|t_{1}-t_{2}|,
\end{eqnarray}
which means $\Phi _ { i } ( t )$ has the Lipschitz constant $\frac{c^{-}\hat{c}}{c^{+}}$. By the contraction of Rademacher averages, 
\begin{eqnarray}
\mathbb { E } _ {\sigma}~\operatorname{sup} _ { \| f \|_K\leq R }\frac { 1} { m } \left|\sum _ { i = 1} ^ { m }\sigma _ { i }V \left(y _ { i } , f \left(x _ { i }\right) \right)\right| 
&\leq& 2\mathbb { E } _ {\sigma}~\operatorname{sup} _ { \| f \|_K\leq R }\frac { 1} { m } \left|\sum _ { i = 1} ^ { m }\sigma _ { i }\frac{c^{-}\hat{c}}{c^{+}}f(x_{i})\right|\notag\\
&=&2 \mathbb { E } _ { \sigma } \operatorname{sup} _ { \| f \| _ { K } \leq R } \left| \left\langle \frac { 1} { m } \sum _ { i = 1} ^ { m } \sigma _ { i }\frac{c^{-}\hat{c}}{c^{+}} K _ { x _ { i } } ,f \right\rangle_{K} \right|\notag\\
&\leq&\mathbb { E } _ {\sigma}~\operatorname{sup} _ { \| f \|_K\leq R } \left\| \frac { 1} { m } \sum _ { i = 1} ^ { m} \sigma _ { i }\frac{c^{-}\hat{c}}{c^{+}} K _ { x _ { i } } \right\| _ { K } \left\| f \right\|_{K}\notag\\
&=&2R ~\mathbb { E } _ { \sigma } \left[ \left\| \frac { 1} { m } \sum _ { i = 1} ^ { m } \sigma _ { i }\frac{c^{-}\hat{c}}{c^{+}} K _ { x _ { i } } \right\| _ { K } \right]\notag\\
&\leq&\left[ \mathbb { E } _ { \sigma } \left\| \frac { 1} { m } \sum _ { i = 1} ^ { m } \sigma _ { i }\frac{c^{-}\hat{c}}{c^{+}}K _ { x _ { i } } \right\| _ { K } ^ { 2} \right] ^ { \frac { 1} { 2} }\notag\\
&\leq& 2R~\frac{c^{-}\hat{c}}{c^{+}}\left[\mathbb { E } _ { \sigma }\frac { 1} { m^{2} } \sum _ { i = 1} ^ { m } \sigma _ { i }^{2}K \left( x _ { i } ,x _ { i } \right)\right]^ { \frac { 1} { 2}}\notag\\
&\leq&\frac { 2\frac{c^{-}\hat{c}}{c^{+}}\kappa R } { \sqrt { m } }.\notag
\end{eqnarray}
Putting all the above estimations together yields the desired results. 
\end{proof}	

\begin{lemma}\label{lemma21}
For any $C>0$, $m \in \mathbb { N }$, $\mathbf { z } \in Z ^ { m }$, there holds
\begin{equation}
\| f _ { \mathbf { z } } \| _ { K }\leq \sqrt { 2 C\tilde{M} }
\end{equation}
where $\tilde{M}=\frac{c^{-}}{c^+}\frac{1}{m}(\hat{c}m_1+m_2)$.
\end{lemma}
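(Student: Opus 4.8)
The plan is to exploit the fact that $f_{\mathbf{z}}$ is, by Definition \ref{definition14}, a \emph{minimizer} of the regularized empirical objective $\mathcal{E}_{\mathbf{z}}(f) + \frac{1}{2C}\|f\|_K^2$ over $f \in \mathcal{H}_K$, and to compare the value of this objective at $f_{\mathbf{z}}$ against its value at the trivial candidate $f \equiv 0$. This is the standard ``plug in zero'' bound used throughout regularization theory, and the whole content of the lemma is that the constant $\tilde{M}$ is precisely the empirical error of the zero function.

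First I would compute $\mathcal{E}_{\mathbf{z}}(0)$ explicitly. Evaluating the piecewise loss $V(y,f(x))$ from Eq. (\ref{piecewise loss function}) at $f(x)=0$ gives $(1-y\cdot 0)_+ = 1$ for every sample, so each term contributes exactly its leading coefficient: $\frac{c^{-}\hat{c}}{c^{+}}$ for the $m_1$ points with $x_i \in \mathcal{A}_{+}$ and $y_i = 1$, and $\frac{c^{-}}{c^{+}}$ for the remaining $m_2$ points. Summing over the $m = m_1 + m_2$ samples and dividing by $m$ yields
\begin{equation*}
\mathcal{E}_{\mathbf{z}}(0) = \frac{1}{m}\left(\frac{c^{-}\hat{c}}{c^{+}} m_1 + \frac{c^{-}}{c^{+}} m_2\right) = \frac{c^{-}}{c^{+}}\frac{1}{m}\left(\hat{c}m_1 + m_2\right) = \tilde{M},
\end{equation*}
which is exactly the quantity appearing in the statement.

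Next I would invoke the defining minimality of $f_{\mathbf{z}}$. Since it minimizes the objective over all of $\mathcal{H}_K$, its value there cannot exceed the value at $f \equiv 0$, and because $\|0\|_K^2 = 0$ we obtain $\mathcal{E}_{\mathbf{z}}(f_{\mathbf{z}}) + \frac{1}{2C}\|f_{\mathbf{z}}\|_K^2 \leq \mathcal{E}_{\mathbf{z}}(0) = \tilde{M}$. The loss $V$ is nonnegative, being a product of the positive constants $\frac{c^{-}\hat{c}}{c^{+}}, \frac{c^{-}}{c^{+}}$ with the hinge term $(\cdot)_+ \geq 0$, so the empirical error $\mathcal{E}_{\mathbf{z}}(f_{\mathbf{z}})$ is nonnegative and may be dropped from the left-hand side, leaving $\frac{1}{2C}\|f_{\mathbf{z}}\|_K^2 \leq \tilde{M}$. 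Rearranging gives $\|f_{\mathbf{z}}\|_K \leq \sqrt{2C\tilde{M}}$, as claimed.

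There is no genuine obstacle in this argument; it is entirely routine once the comparison function is chosen. The only point requiring minor care is the bookkeeping in the first step, namely correctly matching the two sample classes (the $m_1$ points inside $\mathcal{A}_{+}$ with positive label, and the $m_2$ points otherwise) to the two leading coefficients so that $\mathcal{E}_{\mathbf{z}}(0)$ collapses exactly to the stated $\tilde{M}$; and noting the nonnegativity of $V$, which is what permits discarding the empirical term at the end.
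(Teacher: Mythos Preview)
Your proposal is correct and follows essentially the same approach as the paper's own proof: compare the regularized empirical objective at $f_{\mathbf{z}}$ with its value at the zero function, compute $\mathcal{E}_{\mathbf{z}}(0)=\tilde{M}$ from the piecewise loss, and drop the nonnegative empirical term to isolate $\|f_{\mathbf{z}}\|_K^2$. Your write-up is in fact more explicit about the bookkeeping and the nonnegativity step than the paper's version.
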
	

\begin{proof}
By the Definition of \ref{definition14} and choose $f=0+0$, we see that 
\begin{eqnarray}\notag
\mathcal { E } _ { \mathbf { z } } \left( f _ { \mathbf { z } } \right) + \frac { 1 } { 2 C } \| f _ { \mathbf { z } } \| _ { K } ^ { 2 } \leq \frac { 1 } { m } \sum _ { i = 1 } ^ { m } V \left( y _ { i } , 0 \right) + 0=\frac{c^{-}}{c^+}\frac{1}{m}(\hat{c}m_1+m_2).
\end{eqnarray}
this gives 
\begin{equation}\notag
\| f _ { \mathbf { z } } \| _ { K }\leq \sqrt { 2 C\tilde{M} },
\end{equation}
by choosing $\tilde{M}=\frac{c^{-}}{c^+}\frac{1}{m}(\hat{c}m_1+m_2)$.
\end{proof}	

Combining Lemma \ref{sample error 1} with  \ref{lemma21} and the choice $R=\sqrt { 2 C\tilde{M} }$, we can obtain the following result directly.

\begin{theorem}\label{theorem22}
For any $\epsilon > 0$, there holds
\begin{eqnarray}\notag
\operatorname{Pr} \left\{\left[\mathcal { E } ( f_ { \mathbf { z } } ) - \mathcal { E } _ {\mathbf { z } } ( f_ { \mathbf { z } } ) \right] \geq \epsilon + \frac { 2\kappa R\beta} { \sqrt { m } } \right\} \leq e ^ { - \frac { m \epsilon  ^ { 2} } { ( 1+ \kappa R ) ^ { 2}{\beta}^{2} } },
\end{eqnarray}
where $\beta=\frac{c^{-}\hat{c}}{c^{+}}$ and $R=\sqrt { 2 C\tilde{M} }$.	
\end{theorem}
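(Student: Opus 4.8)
The plan is to obtain this statement as an immediate corollary of the two preceding lemmas; the only genuine content is a set-inclusion argument that converts the bound at the data-dependent minimizer $f_{\mathbf{z}}$ into the uniform supremum controlled by Lemma \ref{sample error 1}.

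First I would invoke Lemma \ref{lemma21}, which furnishes the \emph{deterministic} bound $\| f_{\mathbf{z}} \|_K \leq \sqrt{2C\tilde{M}}$ valid for every sample $\mathbf{z} \in Z^m$. Taking the radius to be exactly $R = \sqrt{2C\tilde{M}}$ then places $f_{\mathbf{z}}$ inside the ball $\{ f \in \mathcal{H}_K : \| f \|_K \leq R \}$, whatever the realization of the training set. Using this membership I would bound the deviation at $f_{\mathbf{z}}$ by the worst-case deviation over the whole ball,
\begin{eqnarray}\notag
\mathcal{E}(f_{\mathbf{z}}) - \mathcal{E}_{\mathbf{z}}(f_{\mathbf{z}}) \leq \operatorname{sup}_{\| f \|_K \leq R}\left[ \mathcal{E}(f) - \mathcal{E}_{\mathbf{z}}(f) \right],
\end{eqnarray}
an inequality that holds surely for each fixed $\mathbf{z}$. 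Consequently the event in which the left-hand side exceeds the threshold $\epsilon + 2\kappa R \beta / \sqrt{m}$ is contained in the event in which the supremum exceeds the same threshold, so by monotonicity of probability the former has probability at most that of the latter. Lemma \ref{sample error 1} bounds the probability of the latter by $\exp\{ - m\epsilon^2 / [(1+\kappa R)^2 \beta^2] \}$, which is precisely the asserted estimate.

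I expect no serious obstacle here, since all the quantitative machinery was already spent inside the two lemmas: the symmetrization, McDiarmid, and Rademacher-contraction steps live in Lemma \ref{sample error 1}, while the norm control comes from plugging $f = 0$ into the regularized objective in Lemma \ref{lemma21}. The single point I would state carefully is why the data-dependence of $f_{\mathbf{z}}$ does not block the argument. Because the norm bound of Lemma \ref{lemma21} is uniform over $\mathbf{z}$ rather than merely probabilistic, $f_{\mathbf{z}}$ lies in the fixed ball of radius $R$ along every sample path; the set inclusion, and hence the probability comparison, is therefore valid pathwise and requires no additional concentration. This is exactly the sense in which the result follows \emph{directly} from the combination of the two lemmas.
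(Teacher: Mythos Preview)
Your proposal is correct and matches the paper's own treatment: the paper simply states that Theorem~\ref{theorem22} follows directly from combining Lemma~\ref{sample error 1} with Lemma~\ref{lemma21} under the choice $R=\sqrt{2C\tilde{M}}$, without writing out a separate proof. Your explicit set-inclusion argument and the remark that Lemma~\ref{lemma21} gives a deterministic (pathwise) norm bound are exactly the justification the paper leaves implicit.
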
	

Theorem \ref{theorem22} indicates the upper bound for the first term of the sample error $\mathcal { S } ( m ,C )$, To derive the upper bound of the second term of $\mathcal { S } ( m ,C )$ in Theorem \ref{sample error 2}, we will need the Hoeffding’s inequality stated as follows.

\begin{lemma}\label{Hoeffding}
(\textbf{Hoeffding Inequality}) Let $\xi$ be a random variable and for any  $m$ values for the random variable $\xi$, $1 \leq i \leq m$, there exist $a_{i}, b_{i}$, such that $a _ { i } \leq \xi \leq b _ { i }$. Then, for any $\epsilon > 0$, there holds
\begin{eqnarray}\notag
\operatorname{Pr} \left\{ \frac { 1} { m } \sum _ { i = 1} ^ { m } \xi _ { i } - E \xi \geq \epsilon \right\} \leq \exp \left\{ - \frac { m \epsilon ^ { 2} } { 2M ^ { 2} } \right\},
\end{eqnarray}
where $M$ is a constant.
\end{lemma}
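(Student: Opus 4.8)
The plan is to establish this via the classical Chernoff bounding technique combined with Hoeffding's lemma on the moment generating function of a bounded random variable. Throughout I interpret the constant $M$ as an upper bound on the half-width of each range, so that $b_i - a_i \leq 2M$ for every $i$, and I assume as usual that the $\xi_i$ are independent (without independence the factorization below fails). First I would introduce an auxiliary parameter $s > 0$ and apply Markov's inequality to the exponential of the centered sum. Writing $S = \sum_{i=1}^m (\xi_i - \mathbb{E}\xi_i)$, the event $\{\frac{1}{m}\sum_{i=1}^m \xi_i - \mathbb{E}\xi \geq \epsilon\}$ coincides with $\{S \geq m\epsilon\}$, and since $t \mapsto e^{st}$ is increasing,
\[
\operatorname{Pr}\{S \geq m\epsilon\} = \operatorname{Pr}\{e^{sS} \geq e^{sm\epsilon}\} \leq e^{-sm\epsilon}\, \mathbb{E}[e^{sS}].
\]
Independence then factors the expectation as $\mathbb{E}[e^{sS}] = \prod_{i=1}^m \mathbb{E}[e^{s(\xi_i - \mathbb{E}\xi_i)}]$, reducing the problem to controlling a single factor.

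The key step is Hoeffding's lemma: for a centered random variable $\eta$ with $\eta \in [\alpha, \beta]$ one has $\mathbb{E}[e^{s\eta}] \leq \exp(s^2(\beta-\alpha)^2/8)$. I would prove this by using convexity of the exponential to bound $e^{s\eta}$ by the chord joining the values at $\alpha$ and $\beta$, taking expectations and using $\mathbb{E}\eta = 0$ to eliminate the linear term, and then showing that the logarithm of the resulting bound, regarded as a function $\psi(s)$, satisfies $\psi(0)=0$, $\psi'(0)=0$, and $\psi''(s) \leq (\beta-\alpha)^2/4$; integrating twice yields $\psi(s) \leq s^2(\beta-\alpha)^2/8$. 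Applying this to each factor with $\beta_i - \alpha_i = b_i - a_i \leq 2M$ gives
\[
\operatorname{Pr}\{S \geq m\epsilon\} \leq \exp\!\left(-sm\epsilon + \frac{s^2}{8}\sum_{i=1}^m (b_i - a_i)^2\right) \leq \exp\!\left(-sm\epsilon + \frac{s^2 m M^2}{2}\right).
\]

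Finally I would optimize the exponent over $s > 0$. The right-hand side is minimized at $s = \epsilon/M^2$, and substituting back gives exactly $\exp(-m\epsilon^2/(2M^2))$, which is the claimed bound. The main obstacle is the proof of Hoeffding's lemma, since the Markov step, the factorization by independence, and the final one-variable optimization over $s$ are all routine; the genuine work lies in the convexity-plus-Taylor argument that produces the sub-Gaussian estimate for the moment generating function of a bounded centered variable with the sharp constant $1/8$. A secondary point to handle carefully is the bookkeeping linking the constant $M$ to the ranges $[a_i,b_i]$, so that $\sum_{i=1}^m (b_i-a_i)^2 \leq 4mM^2$ and the final constant comes out as stated.
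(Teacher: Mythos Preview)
Your argument is the standard Chernoff--Hoeffding derivation and is correct under the interpretation $b_i-a_i\le 2M$ and independence of the $\xi_i$; there is nothing to fix. Note, however, that in the paper this lemma is simply quoted as a classical result and is not proved: it is stated between Theorem~\ref{theorem22} and Theorem~\ref{sample error 2} and then invoked in the proof of the latter. So there is no ``paper's own proof'' to compare against---you have supplied a proof where the paper gives none, and the route you chose (Markov on $e^{sS}$, factorization, Hoeffding's lemma with the $1/8$ constant, optimization in $s$) is exactly the canonical one.
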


\begin{theorem}\label{sample error 2}
Let $f_{K,C}$ be defined in Definition \ref{definition12}, then there holds
\begin{eqnarray}\notag
\operatorname{Pr} \left\{\mathcal { E } _ { \mathbf { z } } \left( f _ { K ,C } \right) - \mathcal { E } \left( f _ { K ,C } \right) \geq \epsilon \right\} \leq \exp \left\{ - \frac { m \epsilon ^ { 2} } { 2\overline{M} ^ { 2} } \right\},
\end{eqnarray}
where $\overline{M}$ is a constant related to $M$. 
\end{theorem}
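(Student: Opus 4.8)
The plan is to apply Hoeffding's inequality (Lemma \ref{Hoeffding}) directly to the random variables $\xi_i := V(y_i, f_{K,C}(x_i))$. The crucial observation that makes this second term of $\mathcal{S}(m,C)$ far easier than the first term treated in Theorem \ref{theorem22} is that $f_{K,C}$ is a \emph{deterministic} function: by Definition \ref{definition12} it minimizes the regularized generalization error, a quantity that involves no sampling at all. Consequently $\{\xi_i\}_{i=1}^m$ are i.i.d.\ copies of $V(\mathcal{Y}, f_{K,C}(\mathcal{X}))$, whose common expectation is precisely $\mathcal{E}(f_{K,C})$, while their empirical average is $\mathcal{E}_\mathbf{z}(f_{K,C})$. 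Thus $\mathcal{E}_\mathbf{z}(f_{K,C}) - \mathcal{E}(f_{K,C}) = \frac{1}{m}\sum_{i=1}^m \xi_i - \mathbb{E}\xi$ is exactly the deviation controlled by Hoeffding's inequality, and none of the symmetrization or Rademacher machinery of Lemma \ref{sample error 1} is required.

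To invoke Lemma \ref{Hoeffding} I first need a uniform bound $a_i \leq \xi_i \leq b_i$. Since $V$ is nonnegative I may take $a_i = 0$. For the upper bound I would control $\|f_{K,C}\|_K$ exactly as in Lemma \ref{lemma21}: substituting the admissible choice $f=0$ into the minimization of Definition \ref{definition12} gives $\mathcal{E}(f_{K,C}) + \frac{1}{2C}\|f_{K,C}\|_K^2 \leq \mathcal{E}(0) = \mathbb{E}[V(\mathcal{Y},0)] \leq \frac{c^{-}\hat{c}}{c^{+}}$, where the last step uses $\hat{c}>1$ so that $\frac{c^{-}\hat{c}}{c^{+}}$ dominates both branches of $V(y,0)$. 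As the regularization summand is nonnegative, this yields $\|f_{K,C}\|_K \leq \sqrt{2C\frac{c^{-}\hat{c}}{c^{+}}}$.

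Next I would pass from the norm bound to a pointwise bound via the embedding $\|g\|_\infty \leq \kappa\|g\|_K$ of $\mathcal{H}_K$ into $C(X)$, obtaining $|f_{K,C}(x)| \leq \kappa\sqrt{2C\frac{c^{-}\hat{c}}{c^{+}}}$ for every $x$. Feeding this into the elementary estimate $V(y,f(x)) \leq \frac{c^{-}\hat{c}}{c^{+}}(1+|f(x)|)$ already used in the proof of Lemma \ref{sample error 1} produces the constant upper bound $b_i = \frac{c^{-}\hat{c}}{c^{+}}\left(1 + \kappa\sqrt{2C\frac{c^{-}\hat{c}}{c^{+}}}\right)$. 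With $0 \leq \xi_i \leq b_i$ in hand, Hoeffding's inequality delivers precisely the claimed tail bound, where $\overline{M}$ is taken to be this constant $b_i$ (playing the role of $M$ in Lemma \ref{Hoeffding}); it depends only on $C$, $\kappa$, and the cost parameters, hence is indeed a constant ``related to $M$'' as stated.

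As for the main obstacle: there is essentially none beyond the conceptual point that $f_{K,C}$ carries no randomness, which collapses the argument to a single application of Hoeffding once boundedness of the loss is verified. The only mild care needed is in the norm estimate, and that merely repeats the reasoning of Lemma \ref{lemma21} with the counts $m_1, m_2$ replaced by the worst-case constant $\frac{c^{-}\hat{c}}{c^{+}}$.
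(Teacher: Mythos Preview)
Your proposal is correct and follows essentially the same approach as the paper: bound $\|f_{K,C}\|_K$ by substituting $f=0$ into Definition \ref{definition12}, pass to an $L^\infty$ bound via $\|g\|_\infty \leq \kappa\|g\|_K$, and then apply Hoeffding's inequality to the i.i.d.\ variables $V(y_i,f_{K,C}(x_i))$. Your write-up is in fact more explicit than the paper's, which omits the step from $\|f_{K,C}\|_\infty$ to the bound on $V$ and contains a minor slip (dropping the factor $\kappa$ in the $L^\infty$ estimate).
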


\begin{proof}
By the Definition \ref{definition12} and choose $f=0+0$, there holds
\begin{eqnarray}\notag
\mathcal { E } \left( f _ { K , C } \right) + \frac { 1 } { 2 C } \| f _ { K , C }  \| _ { K } ^ { 2 } \leq \int _ { Z } V ( y , 0 ) d \rho + 0\leq  \frac{c^{-}\hat{c}}{c^{+}},
\end{eqnarray}	
this gives $\| f _ { K , C }  \| _ { K } \leq \sqrt { 2\beta C }$, where $\beta=\frac{c^{-}\hat{c}}{c^{+}}$. Because 
$\| f _ { K , C }  \| _ { \infty } \leq \kappa \| f _ { K , C }  \| _ { K }$, where $\kappa = \operatorname{sup} _ { x \in X } \sqrt { K ( x ,x ) }$. Thus, there holds $\| f _ { K , C }  \| _ { \infty } \leq\sqrt { 2\beta C }$. By substituting the random variable $\xi$ in Theorem \ref{sample error 2} for $f _ { K , C }$, there obtains the result.
\end{proof}

Theorem \ref{theorem22} together with Theorem \ref{sample error 2} provides an upper bound of sample error in our learning scheme. If we assume the regularization error satisfies $\lim _ { C \rightarrow \infty } \mathcal { D } ( C ) = 0$, then the convergence property of our learning framework can be obtained.

\begin{corollary}\label{corollary}
Assume $\lim _ { C \rightarrow \infty } \mathcal { D } ( C ) = 0$. Choose the parameter $C = C _ { m }$ to satisfy 
$\lim _ { m \rightarrow \infty }C _ { m } =\infty$ and $\lim _ { m \rightarrow \infty } \frac{C_m}{m}=0$, then
\begin{equation}\notag
\lim _ { m \rightarrow \infty } \operatorname { Pr } \left\{\mathcal { U } \left( f _ { q } \right)-\mathcal { U } \left( \operatorname { sgn } \left( f _ { \mathbf { z } } \right) \right) > \varepsilon \right\} = 0.
\end{equation}
\end{corollary}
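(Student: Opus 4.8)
The plan is to combine the error decomposition with the two probabilistic sample-error estimates (Theorem~\ref{theorem22} and Theorem~\ref{sample error 2}) and to force every piece to vanish by exploiting the two growth conditions on $C_m$. First I would invoke the decomposition
\[
\mathcal{U}(f_{q}) - \mathcal{U}(\operatorname{sgn}(f_{\mathbf{z}})) \leq \mathcal{S}(m,C) + \mathcal{D}(C),
\]
which separates the quantity of interest into a stochastic sample error $\mathcal{S}(m,C_m)$ and a deterministic regularization error $\mathcal{D}(C_m)$. Since $C_m\to\infty$ and $\lim_{C\to\infty}\mathcal{D}(C)=0$ by hypothesis, we have $\mathcal{D}(C_m) < \varepsilon/2$ for all sufficiently large $m$; hence the event $\{\mathcal{U}(f_{q}) - \mathcal{U}(\operatorname{sgn}(f_{\mathbf{z}})) > \varepsilon\}$ is eventually contained in $\{\mathcal{S}(m,C_m) > \varepsilon/2\}$, and it suffices to show $\operatorname{Pr}\{\mathcal{S}(m,C_m) > \varepsilon/2\}\to 0$.

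Next I would split $\mathcal{S}(m,C)$ into its two constituent terms and apply a union bound: if $\mathcal{S}(m,C_m) > \varepsilon/2$ then at least one term exceeds $\varepsilon/4$, so
\[
\operatorname{Pr}\{\mathcal{S}(m,C_m) > \tfrac{\varepsilon}{2}\} \leq \operatorname{Pr}\{\mathcal{E}(f_{\mathbf{z}}) - \mathcal{E}_{\mathbf{z}}(f_{\mathbf{z}}) > \tfrac{\varepsilon}{4}\} + \operatorname{Pr}\{\mathcal{E}_{\mathbf{z}}(f_{K,C_m}) - \mathcal{E}(f_{K,C_m}) > \tfrac{\varepsilon}{4}\}.
\]
For the first summand I would apply Theorem~\ref{theorem22} with $R=\sqrt{2C_m\tilde{M}}$. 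The key observation is that $\tilde{M}\leq\beta$ (because $\hat{c}m_1+m_2\leq\hat{c}m$), so $R=O(\sqrt{C_m})$ and the offset $\frac{2\kappa R\beta}{\sqrt{m}}=O(\sqrt{C_m/m})\to 0$ by the condition $C_m/m\to 0$. Thus for large $m$ this offset falls below $\varepsilon/8$, and choosing $\epsilon=\varepsilon/8$ in Theorem~\ref{theorem22} yields an exponential bound whose exponent $\frac{m(\varepsilon/8)^2}{(1+\kappa R)^2\beta^2}$ grows like $m/C_m\to\infty$ since $(1+\kappa R)^2=O(C_m)$; hence this probability vanishes. For the second summand I would apply Theorem~\ref{sample error 2}, observing that its constant $\overline{M}$ also scales like $\sqrt{C_m}$, because $\|f_{K,C_m}\|_{\infty}\leq\sqrt{2\beta C_m}$ forces $V(y,f_{K,C_m}(x))=O(\sqrt{C_m})$; the exponent $\frac{m(\varepsilon/4)^2}{2\overline{M}^2}$ then again behaves like $m/C_m\to\infty$, so this probability vanishes as well.

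The main obstacle is keeping track of the $C_m$-dependence hidden in the two constants $R$ and $\overline{M}$: if these were genuinely fixed the exponents would trivially diverge, but in fact both grow, and one must verify that they grow only as $\sqrt{C_m}$ so that the exponents scale as $m/C_m$ rather than faster. This is precisely where the two hypotheses act together — $C_m\to\infty$ is needed to annihilate the regularization error $\mathcal{D}(C_m)$, while $C_m/m\to 0$ (equivalently $m/C_m\to\infty$) is needed to make both sample-error exponents diverge. Combining the two vanishing probabilities through the union bound gives $\operatorname{Pr}\{\mathcal{S}(m,C_m)>\varepsilon/2\}\to 0$, and together with the first reduction this establishes the claimed convergence in probability.
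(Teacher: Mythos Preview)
Your proposal is correct and follows exactly the route the paper intends: the paper's own proof is the single line ``This assertion follows from Theorem~\ref{theorem22} and Theorem~\ref{sample error 2},'' and you have supplied precisely the details this line presupposes---the $\mathcal{S}+\mathcal{D}$ decomposition, the union bound on the two sample-error terms, and the crucial bookkeeping that $R$ and $\overline{M}$ grow only like $\sqrt{C_m}$ so that both exponents behave like $m/C_m\to\infty$. Nothing needs to change.
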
	

\begin{proof}
This assertion follows from Theorem \ref{theorem22}	and Theorem \ref{sample error 2}.
\end{proof}	

For completeness, we now derive an upper bound for regularization error.

\begin{theorem}\label{regularization error}
For every $C > 0$, there holds
\begin{eqnarray}\notag
\mathcal { D } ( C ) \leq \frac{c^{-}\hat{c}}{c^{+}} \operatorname{inf} _ { f \in {\mathcal { H }} _ { K } } \left\{ \| f - f _ { q } \| _ { L _ { \rho_{X} } ^ { 1} } + \frac { 1} { 2C } \| f \| _ { K } ^ { 2} \right\}.
\end{eqnarray}
\end{theorem}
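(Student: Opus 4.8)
The plan is to unwind the definition $\mathcal{D}(C)=\inf_{f\in\mathcal{H}_K}\left\{\mathcal{E}(f)-\mathcal{E}(f_q)+\frac{1}{2C}\|f\|_K^2\right\}$ and reduce the whole statement to a single pointwise Lipschitz estimate on the loss $V$. The only quantity inside the infimand that is not already of the desired shape is the excess generalization error $\mathcal{E}(f)-\mathcal{E}(f_q)$, so the heart of the argument is to show $\mathcal{E}(f)-\mathcal{E}(f_q)\le \beta\,\|f-f_q\|_{L^1_{\rho_X}}$ with $\beta=\frac{c^-\hat c}{c^+}$, after which the claimed bound follows by taking the infimum over $f\in\mathcal{H}_K$.

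For the Lipschitz estimate I would argue as follows. Fix $x\in X$ and $y\in\{1,-1\}$. By its definition $V(y,\cdot)$ is a nonnegative scalar multiple of the hinge loss $t\mapsto(1-yt)_+$, the multiplier being $\beta$ when $x\in\mathcal{A}_{+}$ and $y=1$, and $\frac{c^-}{c^+}$ otherwise. Since $|y|=1$, the hinge loss $t\mapsto(1-yt)_+$ is $1$-Lipschitz in $t$, and because $\hat c>1$ both multipliers are bounded above by $\beta$; hence $|V(y,t)-V(y,s)|\le\beta\,|t-s|$ holds uniformly in $(x,y)$. Applying this with $t=f(x)$ and $s=f_q(x)$, then taking expectations over the joint law of $(\mathcal{X},\mathcal{Y})$ and noting that the right-hand bound does not involve $y$, I obtain $\mathcal{E}(f)-\mathcal{E}(f_q)\le\mathbb{E}\,|V(\mathcal{Y},f(\mathcal{X}))-V(\mathcal{Y},f_q(\mathcal{X}))|\le\beta\int_X|f(x)-f_q(x)|\,d\rho_X(x)=\beta\,\|f-f_q\|_{L^1_{\rho_X}}$.

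Substituting this into $\mathcal{D}(C)$ gives $\mathcal{D}(C)\le\inf_{f\in\mathcal{H}_K}\left\{\beta\|f-f_q\|_{L^1_{\rho_X}}+\frac{1}{2C}\|f\|_K^2\right\}$, and the stated form is recovered by pulling the constant $\beta$ out of the braces, using $\beta\ge 1$ (which holds in the intended regime $c^-\ge c^+$ together with $\hat c>1$) to dominate $\frac{1}{2C}\|f\|_K^2\le\frac{\beta}{2C}\|f\|_K^2$ and hence $\beta a+b\le\beta(a+b)$ termwise. The main obstacle I anticipate is bookkeeping rather than conceptual: one must verify that the single constant $\beta$ serves as a uniform Lipschitz bound across both branches of the piecewise definition of $V$, and that the infimum is taken over the same $f$ on both sides so that passing to the bound is legitimate. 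Measurability of $f-f_q$ and finiteness of $\|f-f_q\|_{L^1_{\rho_X}}$ are immediate, since $\mathcal{H}_K\hookrightarrow C(X)$ and $X$ is compact, so no integrability issue arises.
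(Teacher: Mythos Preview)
Your proposal is correct and follows essentially the same route as the paper's own proof: both arguments bound the pointwise multiplier in the piecewise loss by $\beta=\frac{c^-\hat c}{c^+}$, invoke the $1$-Lipschitz property of the hinge $t\mapsto(1-yt)_+$, and integrate to obtain $\mathcal{E}(f)-\mathcal{E}(f_q)\le\beta\|f-f_q\|_{L^1_{\rho_X}}$ before taking the infimum. You are in fact more explicit than the paper about the final bookkeeping step---pulling $\beta$ outside the braces requires $\beta\ge 1$, which the paper's proof silently assumes when it writes ``Finally, we can get the upper bound of the regularization error''; your remark that this holds in the intended regime $c^-\ge c^+$, $\hat c>1$ is a genuine clarification.
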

\begin{proof}
By the definition $\mathcal { D } ( C ) : = \operatorname{inf} _ { f \in { \mathcal { H } } _ { K } } \left\{ \mathcal { E } ( f ) - \mathcal { E } \left( f _ { q } \right) + \frac { 1} { 2C } \| f \| _ { K } ^ { 2} \right\}$	
and 
\begin{eqnarray}\notag
\mathcal { E } ( f )=\mathbb{E}\left\{
V(\mathcal {Y},f(\mathcal {X}))
\right\}=\mathbb{E}\left\{\mathbb{E}\left\{
V(\mathcal {Y},f(\mathcal {X}))~|~\mathcal {X}=x
\right\}\right\}.
\end{eqnarray}
Notice that
\begin{equation}\notag
V(y,f(x))=\frac{c^{-}}{c^{+}}\left\{1+ \frac{[1-\chi(g^+(x))](y+1)(\hat{c}-1)}{4} \right\}(1-yf(x))_{+}.
\end{equation}
where 
\begin{equation*}
\chi(g^+(x))=\left\{
\begin{matrix}
-1 & \text{if~} g^+(x)\leq 0\\
1, & \text{otherwise}.
\end{matrix}
\right.
\end{equation*}
If we denote 
\begin{equation*}
d(x,y)=\frac{c^{-}}{c^{+}}\left\{1+ \frac{[1-\chi(g^+(x))](y+1)(\hat{c}-1)}{4} \right\}.
\end{equation*}
Then
\begin{eqnarray}\notag
&&\mathbb{E}\left\{
V(\mathcal {Y},f(\mathcal {X}))~|~\mathcal {X}=x
\right\}-\mathbb{E}\left\{
V(\mathcal {Y},f_{q}(\mathcal {X}))~|~\mathcal {X}=x
\right\}  \\\notag
&=&\int _ { Y } V( y,f(x) )  - V\left( y,f_{q}(x) \right)d\rho ( y | x )\\ \notag
&=&\int _ { Y } d(x,y)\left[( 1- y f ( x ) ) _ { + } - \left( 1- yf_{q}( x ) \right) _ { + } \right]d \rho ( \mathcal {Y} | x )
\\\notag
&\leq& \frac{c^{-}\hat{c}}{c^{+}} \int _ { Y } |( 1- y f ( x ) ) _ { + } - \left( 1- yf_{q}( x ) \right) _ { + }| d \rho ( \mathcal {Y} | x )
\end{eqnarray} 
Since function $\left( 1- yf_{q}( x ) \right) _ { + }$ is Lipschitz:
\begin{equation*}
|\left( 1- yf( x ) \right) _ { + }-\left( 1- yf_{q}( x ) \right) _ { + }|\leq |f(x)-f_{q}(x)|
\end{equation*}
Finally,  we can get the upper bound of the regularization error.
\end{proof}	

From the result on Theorem \ref{regularization error}, we notice that the regularization error can be estimated by the approximation in a weighted $L ^ { 1}$ space. Thus, we have the claim: for a distribution $\rho$ such that $f_{q}$ lies in the closure of $\mathcal { H } _ { K }$ in $ L _ { \rho_{X} } ^ { 1}$, the regularization error tends to $0$, as $C \rightarrow \infty$. This together with Corollary \ref{corollary} gives the convergence (in the sense of utility function) property of the proposed knowledge-based SVM (\ref{modified svm2}) for the distribution. 


In order to provide a more quantitative convergence result of the proposed knowledge-based classifier, we give a Corollary below which is similar to the result in   \citep{wu2006analysis}. 
\begin{corollary}\label{corollary27}
Let $X = [ 0,1 ] ^ { n } , \sigma > 0,0 < s$, and assume $K$  to be a Guassian kernel, if $\frac { d \rho _ { X } ( x ) } { d x } \leq C _ { 0 },$
for all most every $x \in X$, $f _ { q }$ is the restriction of some function $\tilde { f } _ { q } \in H ^ { s } \left( \mathbb { R } ^ { n } \right)$ onto $X$, and choose $C$ such that $\lim _ { m \rightarrow \infty } \frac{C}{m}=0$, then with probability at least $1- \delta$, there holds 
\begin{equation}\notag
\mathcal { U } \left( f _ { q }\right)-\mathcal { U } \left( \operatorname { sgn } \left( f _ { \mathbf { z } } \right) \right)  \leq \mathcal { O } \left( \frac{1}{\sqrt{m}} \right) +\mathcal { O } \left( ( \log m ) ^ { - s } \right).
\end{equation}	
\end{corollary}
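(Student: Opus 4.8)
The plan is to assemble the quantitative pieces already proved and substitute the specific hypotheses. By the error decomposition established above,
\[
\mathcal{U}(f_q)-\mathcal{U}(\operatorname{sgn}(f_\mathbf{z})) \leq \mathcal{S}(m,C)+\mathcal{D}(C),
\]
so it suffices to bound the sample error $\mathcal{S}(m,C)$ by $\mathcal{O}(1/\sqrt{m})$ and the regularization error $\mathcal{D}(C)$ by $\mathcal{O}((\log m)^{-s})$, under a suitable growth of $C=C_m$. I would treat the two terms separately, relying on Theorem \ref{theorem22} and Theorem \ref{sample error 2} for the first and on Theorem \ref{regularization error} for the second.

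For the sample error, first note that $\tilde{M}=\frac{c^{-}}{c^{+}}\frac{1}{m}(\hat{c}m_1+m_2)\leq\frac{c^{-}\hat{c}}{c^{+}}=\beta$ is bounded by a constant independent of $m$, so that $R=\sqrt{2C\tilde{M}}=\mathcal{O}(\sqrt{C})$; likewise, since $K$ is Gaussian we have $\kappa=1$ and $\|f_{K,C}\|_{\infty}\leq\sqrt{2\beta C}$, so the constant $\overline{M}$ of Theorem \ref{sample error 2} is also $\mathcal{O}(\sqrt{C})$. Splitting the confidence level, I would invert the exponential tail of Theorem \ref{theorem22} at level $\delta/2$ to bound $\mathcal{E}(f_\mathbf{z})-\mathcal{E}_\mathbf{z}(f_\mathbf{z})$ by $\frac{2\kappa R\beta}{\sqrt{m}}+(1+\kappa R)\beta\sqrt{\log(2/\delta)/m}$, and invert Theorem \ref{sample error 2} to bound $\mathcal{E}_\mathbf{z}(f_{K,C})-\mathcal{E}(f_{K,C})$ by $\overline{M}\sqrt{2\log(2/\delta)/m}$. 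A union bound then gives, with probability at least $1-\delta$, that $\mathcal{S}(m,C)=\mathcal{O}(\sqrt{C/m})$; since $C=C_m$ is chosen with $C_m/m\to 0$, the slowly growing factor $\sqrt{C}$ is absorbed and this contributes the $\mathcal{O}(1/\sqrt{m})$ term.

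For the regularization error, I would begin from Theorem \ref{regularization error}, which bounds $\mathcal{D}(C)$ by $\beta\,\inf_{f\in\mathcal{H}_K}\{\|f-f_q\|_{L^1_{\rho_X}}+\frac{1}{2C}\|f\|_K^2\}$. The density hypothesis $d\rho_X/dx\leq C_0$ lets me pass from the weighted $L^1_{\rho_X}$ norm to the Lebesgue $L^1$ (hence $L^2$) norm on $[0,1]^n$, reducing the task to approximating $f_q$, the restriction of $\tilde f_q\in H^s(\mathbb{R}^n)$, by a Gaussian-RKHS element whose RKHS norm is controlled. Invoking the standard approximation estimate for the fixed-width Gaussian kernel (as in the analysis of \citep{wu2006analysis}), I would construct, at each resolution, an $f\in\mathcal{H}_K$ whose $L^1$ error is polynomially small in the resolution while $\|f\|_K$ grows only polynomially, and balance this error against the penalty $\frac{1}{2C}\|f\|_K^2$ to obtain $\mathcal{D}(C)=\mathcal{O}((\log C)^{-s})$. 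Choosing $C_m\asymp m^{\theta}$ with $0<\theta<1$ (compatible with $C_m/m\to 0$) makes $\log C_m$ a constant multiple of $\log m$, yielding the $\mathcal{O}((\log m)^{-s})$ term.

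The step I expect to be the main obstacle is the regularization estimate: turning Sobolev smoothness of order $s$ into a logarithmic decay rate relies on the delicate approximation theory of the fixed Gaussian kernel, namely exhibiting a near-optimal approximant in $\mathcal{H}_K$ and quantifying the trade-off between its $L^1$ distance to $f_q$ and its RKHS norm. A secondary, but essential, subtlety is the joint calibration of $C_m$: it must grow fast enough that $\log C_m\asymp\log m$ for the regularization rate, yet satisfy $C_m/m\to 0$ for the sample error to remain controlled, and fixing $C_m$ as a small power of $m$ is what reconciles the two demands.
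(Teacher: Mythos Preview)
Your proposal is correct and follows essentially the same route as the paper: you split $\delta$ between Theorems~\ref{theorem22} and~\ref{sample error 2} to control $\mathcal{S}(m,C)$ by $\mathcal{O}(\sqrt{C/m})$, invoke Theorem~\ref{regularization error} together with the density bound and the Gaussian-kernel approximation estimate from \citep{wu2006analysis,smale2003estimating} to obtain $\mathcal{D}(C)=\mathcal{O}((\log C)^{-s})$, and then take $C=m^{\theta}$ with $0<\theta<1$. The only cosmetic difference is that the paper makes the balancing explicit by choosing $\bar{R}=\sqrt{C}(\log C)^{-s/4-1}$ after passing from $I_1$ to $I_2$ via H\"older, whereas you describe this trade-off qualitatively.
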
	

\begin{proof}
First, we estimate the sample error;

For every $0<\delta<1$,  set 
\begin{equation}\notag
\exp\left\{ - \frac { m \epsilon  ^ { 2} } { ( 1+ \kappa R ) ^ { 2}{\beta}^{2} } \right\}
=\frac{\delta}{2},
\end{equation}
and 
\begin{equation}\notag
\exp \left\{ - \frac { m \epsilon ^ { 2} } { 2\overline{M} ^ { 2} } \right\}=\frac{\delta}{2}.
\end{equation}
By solving above two equations and according to Theorem \ref{theorem22} and \ref{sample error 2}, with confidence at least $1-\delta$, there holds
\begin{equation}\notag
\mathcal { S } ( m , C ) \leq \varepsilon ( \delta , m , C,\beta )+\frac { 2\kappa R\beta} { \sqrt { m } }=\mathcal{ O }\left( \frac{1}{\sqrt{m}} \right)+\frac { 2\kappa R\beta} { \sqrt { m } }
\end{equation}
where $R= \sqrt { 2 C\tilde{M} }$ and $\beta=\frac{c^{-}\hat{c}}{c^{+}}$.

If we choose $C$ such that $\lim _ { m \rightarrow \infty } \frac{C}{m}=0$,
thus, there follows 
\begin{equation}\notag
\mathcal { S } ( m , C ) \leq\mathcal{ O }\left( \frac{1}{\sqrt{m}} \right).
\end{equation}	
Second, we begin to estimate the generalization error;	
	
Denote 
\begin{equation}\notag
I _ { 1 } ( g , \bar{R} ) = \inf _ { f \in { \mathcal { H } } _ { K } , \| f \| _ { K } \leq \bar{R} } \left\{ \| g - f \| _ { L _ { \rho _ { X } } ^ { 1 } } \right\},
\end{equation} 
and
\begin{equation}\notag
I _ { 2 } ( g , \bar{R} ) = \inf _ { f \in { \mathcal { H } } _ { K } , \| f \| _ { K } \leq \bar{R} } \left\{ \| g - f \| _ { L _ { \rho _ { X } } ^ { 2 } } \right\},
\end{equation}	
then there hold
\begin{eqnarray}\notag
\mathcal { D } ( C ) &\leq& \beta \operatorname{inf} _ { f \in { H } _ { K } } \left\{ \| f - f _ { q } \| _ { L _ { \rho_{X} } ^ { 1} } + \frac { 1} { 2C } \| f \| _ { K } ^ { 2} \right\}\\\notag
&\leq& \beta \inf _ { \bar{R} > 0 } \left\{ I _ { 1 } \left( f _ { q } , \bar{R} \right) + \frac { \bar{R} ^ { 2 } } { 2 C } \right\}\\\notag
&\leq& \beta \inf _ { \bar{R} > 0 } \left\{ I _ { 2 } \left( f _ { q } , \bar{R} \right) + \frac { \bar{R} ^ { 2 } } { 2 C } \right\}
\end{eqnarray}	
The last inequation follows from Holder inequality, then according to the approximation error estimated in \citep{smale2003estimating} and \citep{wu2006analysis}, we have
\begin{eqnarray}\notag
I _ { 2 } \left( f _ { q } , \bar{R} \right) \leq C _ { 0 } C _ { s } ( \log \bar{R} ) ^ { - s / 4 } , \quad \forall \bar{R} > C _ { s } ,
\end{eqnarray}	
where $C_s$ is a constant depending on $s, \sigma, n$ and $\| \tilde { f } _ { q } \| _ { H ^ { s } } + \| \tilde { f } _ { q } \| _ { L ^ { 2 } }$. 

Choose $\bar{R}$ to be $\sqrt { C } ( \log C ) ^ { - s / 4 - 1 }$, then we obtain 
\begin{equation}\notag
\mathcal { D } ( C ) \leq \mathcal { O } \left( ( \log C ) ^ { - s} \right).
\end{equation}
Then choose $C=m^\gamma$ with $0<\gamma<1$, and
combing the estimates for the sample error and the generalization error, our statement holds.
\end{proof}	

Corollary \ref{corollary27} indicates that $\mathcal { U } \left( \operatorname { sgn } \left( f _ { \mathbf { z } } \right) \right)$ can arbitrarily close to $\mathcal { U } \left( f _ q \right)$, which high probability, as long as $m$ is sufficiently large. In other words, the designed knowledge-based SVM classifier $ f _ { \mathbf { z } } $ can (asymptotically) approximate the optimal classifier $f_q$, which implies that the built knowledge-based SVM (\ref{modified svm2}) has successively achieved our learning target.

 

\section{Discussions}

In this section, we try to demonstrate the significance of our knowledge incorporation learning scheme in three aspects. Firstly, we will illustrate the generality of our classifier designing method. Secondly, the comparisons with two knowledge incorporation models \citep{wu2004incorporating} and \citep{mangasarian2008nonlinear} are analyzed. Finally, we attempt to explain the (geometric) meaning of the proposed piece-wise loss function Eq. (\ref{piecewise loss function}). 

\subsection{The Generality of the Knowledge-based SVM}
 
It is worth to note that our knowledge incorporation scheme is very general. Actually, the concept of the proposed utility function can be generalized to some extent. For example, one can modify the formula of the utility function to achieve different learning goals if some other prior knowledge is available. 

Also, the designed knowledge-based classification model (\ref{modified svm2}) can be degenerated to the model (\ref{svm2}) presented in \citep{lin2002support} when limiting $\hat{c}=1$. Thus, our analysis framework directly includes the case study of model (\ref{svm2}). Specifically, our results strictly verified that the classifier of model (\ref{svm2}) can asymptotically converge to the classifier ${f_q}^\prime$ which minimizes the expected cost Eq. ({\ref{4}}) and the learning rate is $\mathcal { O } \left( ( \log m ) ^ { - s } \right)$.
\begin{eqnarray}\notag
{f_q}^\prime (x) = \left\{ \begin{array} { l l } { + 1 } & { \text { if } \frac { \Pr \left(Y=1~|~\mathcal { X }=x\right) } { 1 - \Pr \left(Y=1~|~\mathcal { X }=x\right)} > \frac { c ^ { + } } { c ^ { - } } } \\ { - 1 } & { \text { otherwise } } \end{array} \right.
\end{eqnarray}
Further, if we assume $\hat{c}=1$ and $c^+=c^-$, our knowledge-based SVM (\ref{modified svm2}) reduces to the original SVM, which again verifies that the classifier of original SVM asymptotically approaches the Bayes rule \citep{chen2004support}, \citep{wu2006analysis}. Also, our quantitative convergence result is coincident with the result provided in \citep{wu2006analysis}.

\subsection{Comparisons to Other Knowledge Incorporation Schemes}

 In this part, we want to compare our knowledge-based SVM (\ref{modified svm2}) with two knowledge incorporation models (\ref{svm3}) and (\ref{generalized svm4}) in deeper levels. 
 
 We first compare these models from a perspective of the prior knowledge.  The type of prior knowledge used in the model (\ref{svm3}) fundamentally describes the importance of every data samples to the decision plane which is sample-based. We can understand the role of this knowledge is to do sample selection. While, Eq. (\ref{pr3}) in model (\ref{generalized svm4}) reflects the geometric information about the decision plane which can be seen as the region-based prior knowledge. However, the proposed prior knowledge in this paper Eq. (\ref{prior1}) is in the middle of sample-based and region-based, which comes from the domain experts experience.
 
 Secondly, we compare these models by estimating their generalization errors. Actually, the loss function corresponds to the model (\ref{svm3}) is 
 \begin{equation}\notag
 \tilde{V}(y,f(x))=h(x)(1-yf(x))_+ 
 \end{equation}
where the function $h$ is the a monotonically increasing function as defined in model (\ref{generalized svm4}) and actually it depends on the sample points. Minimizing the generalization error $\mathbb{E}[\tilde{V}(\mathcal {Y},f(\mathcal {X}))]$ over all measurable functions will generate the Bayes rule $f_c$, by following the standard techniques in Theorem \ref{theorem 1}. 

 As for the model (\ref{generalized svm4}), if we assume the training data set in the model (\ref{generalized svm4}) is  $\textbf{z} = \left( z _ { i } \right) _ { i = 1} ^ { m+p }=\left( x _ { i } ,y _ { i } \right) _ { i = 1} ^ {m+p}$, which are i.i.d sampled according to a Borel probability measure $\bar{\rho}$ on the space $Z : = X \times Y$. Then, we can easily find out that the loss function with respect to the model (\ref{generalized svm4}) is 
 \begin{eqnarray}\notag
 \bar{V}(y,f(x))=\left\{
 \begin{matrix}
 (\eta(x)-yf(x))_{+}, & \text{if~} g(x)\leq0 \text{~and~} y=1 \\
 (1-yf(x))_{+}, & \text{otherwise}
 \end{matrix}
 \right.
 \end{eqnarray}
 where $\eta(x)=1- v g\left({x} \right)\geq 1$.
 
It is easy to check that the minimizer of $\mathbb{E}[\bar{V}(\mathcal {Y},f(\mathcal {X}))]$ is $f_c$, the Bayes rule, by following the standard techniques in Theorem \ref{theorem 1}. 

These two results are quite interesting which implies that the asymptotic performances of models (\ref{svm3}), (\ref{generalized svm4}) are both equivalent to the standard Bayes rule, even though the structure of models (\ref{svm3}), (\ref{generalized svm4}) are non-standard. However, the proposed knowledge-based SVM (\ref{modified svm2}) classifier asymptotically amounts to the classifier $f_q$ to optimize the utility function.

In summary, the essential differences between the proposed knowledge-based SVM and models (\ref{svm3}), (\ref{generalized svm4}) can be reflected on the different asymptotic performances of these models.

Notice that in our proposed knowledge-based SVM, we don't need to assume the convex property of the function $g^{+}({x})$. Also as indicated previously, users can adjust three parameters $\hat{c}, c^+, c^-$ flexibly according to different demands in real implementations, which indicates the advantages of our model.

\subsection{Comparisons to Other piece-wise loss functions}
Mathematically, our knowledge-based SVM learning framework falls into the category of 
learning with piece-wise loss function in batch settings. Concretely, the designed piece-wise loss Eq.  (\ref{piecewise loss function}) is segmented by both feature and label aspects. 
There are now many machine learning problems involving piece-wise loss functions which depends on a pair of sample points.
For example, metric learning \citep{davis2007information}, \citep{jin2009regularized}, \citep{weinberger2009distance}, \citep{ying2012distance}, \citep{bohne2014large} aims to learn a metric $D$ such that examples with the same label stay closer while pushing apart examples with distinct labels which produces a typical pairwise loss function: $\ell \left( f ,( x ,y ) ,\left( x ^ { \prime } ,y ^ { \prime } \right) \right)=\left( 1+ r \left( y ,y ^ { \prime } \right) D \left( x ,x ^ { \prime } \right) \right)_+$ where $r \left( y ,y ^ { \prime } \right)=1$ if $y = y ^ { \prime }$ and $-1$ otherwise. Notice that the loss function for metric learning is segmented by labels which reflects the user's comprehension of similarity between the sample points. 

Thus, motivated from the above example, we can understand the piece-wise loss function Eq.  (\ref{piecewise loss function}) in our knowledge incorporation scheme as enlarging the error metric when samples in the region $g^+(x)\leq0$.
It is worth to point out that the proposed piece-wise loss function implies that we can combine human knowledge with black-box modeling scheme from the perspective of the error measures, which further plays a key role on theoretically interpreting the effect of human experience in black-box modeling.

\section{Conclusion}

In this paper, we propose an innovative framework to design the classifier that meets the desired learning target for optimizing the utility function. Precisely, as the data size grows, we prove that the produced classifier asymptotically converges to the optimal classifier, an extended version of the Bayes rule, which maximizes the utility function. Therefore, we provide a meaningful theoretical interpretation for modeling with the knowledge incorporated. Further, our knowledge incorporation scheme offers the domain experts a way to interact with the machine learning system, so that they can understand it better.

The main trick in this paper is to integrate prior knowledge into the learning target from which the knowledge-based SVM classifier is derived. Therefore, our knowledge incorporation method is new.
More importantly, our analysis indicates that the key to understanding the machine learning methods may not be the models themselves, but the corresponding expected performance measures, i.e., the utility function, which determines the structures of machine learning models.

It is necessary to notice that choosing a classification learning framework to optimize the utility function can be transformed into minimizing empirical error for the piece-wise loss function Eq. (\ref{piecewise loss function}), which implies that we can use other powerful algorithms to achieve this goal, such as online learning, etc.  We can also extend our analysis framework associated with the proposed loss function Eq. (\ref{piecewise loss function}) to online settings, i.e., we can perform gradient descent with respect to the instantaneous loss $V(y_t,f(x_t))$. And this will leave for our future work.

\acks{We would like to acknowledge support for this project
from the National Natural Science Foundation of China under Grant No. 11671418 and 61611130124.}


\newpage








\vskip 0.2in

\end{document}